\documentclass{article}

\usepackage{microtype}
\usepackage{graphicx}
\usepackage{subcaption}
\usepackage{booktabs} 

\usepackage{hyperref}

\usepackage[preprint]{icml2026}

\usepackage{amsmath}
\usepackage{amssymb}
\usepackage{mathtools}
\usepackage{amsthm}

\usepackage[capitalize,noabbrev]{cleveref}
\usepackage{booktabs}
\usepackage{multirow}
\usepackage{caption}
\usepackage{graphicx}
\usepackage{algorithm}
\usepackage{algorithmic}

\usepackage{xy}
\xyoption{all}

\def\mA{{\boldsymbol{A}}}
\def\mB{{\boldsymbol{B}}}
\def\mI{{\boldsymbol{I}}}
\def\mW{{\boldsymbol{W}}}
\def\mM{{\boldsymbol{M}}}

\theoremstyle{plain}
\newtheorem{theorem}{Theorem}[section]

\newtheorem{lemma}[theorem]{Lemma}
\newtheorem{corollary}[theorem]{Corollary}
\theoremstyle{definition}
\newtheorem{definition}[theorem]{Definition}

\newtheorem{example}[theorem]{Example}
\theoremstyle{remark}

\usepackage[textsize=tiny]{todonotes}

\icmltitlerunning{Modeling Topological Impact on Node Attribute Distributions in Attributed Graphs}

\begin{document}

\twocolumn[
  \icmltitle{Modeling Topological Impact on Node Attribute Distributions in Attributed Graphs}



  \icmlsetsymbol{equal}{*}

  \begin{icmlauthorlist}
    \icmlauthor{Amirreza Shiralinasab Langari}{yyy}
    \icmlauthor{Leila Yeganeh}{yyy}
    \icmlauthor{Kim Khoa Nguyen}{yyy}
  \end{icmlauthorlist}

  \icmlaffiliation{yyy}{Department of Electrical Engineering, \'Ecole de Technologie Sup\'erieure (ETS), University of Quebec, Montreal, Canada}

  \icmlcorrespondingauthor{Amirreza Shiralinasab Langari}{amirreza.shiralinasab-langari.1@ens.etsmtl.ca}

  \icmlkeywords{Machine Learning, ICML}

  \vskip 0.3in
]



\printAffiliationsAndNotice{}  

\begin{abstract}
  We investigate how the topology of attributed graphs influences the distribution of node attributes. This work offers a novel perspective by treating topology and attributes as structurally distinct but interacting components. We introduce an algebraic approach that combines a graph's topology with the probability distribution of node attributes, resulting in topology-influenced distributions. First, we develop a categorical framework to formalize how a node perceives the graph's topology. We then quantify this point of view and integrate it with the distribution of node attributes to capture topological effects. We interpret these topology-conditioned distributions as approximations of the posteriors $P(\cdot \mid v)$ and $P(\cdot \mid \mathcal{G})$.
  We further establish a principled sufficiency condition by showing that, on complete graphs, where topology carries no informative structure, our construction recovers the original attribute distribution. To evaluate our approach, we introduce an intentionally simple testbed model, \textbf{ID}, and use unsupervised graph anomaly detection as a probing task.
\end{abstract}
\section{Introduction}
\begin{figure}[t]
    \centering
  \includegraphics[scale=0.5]{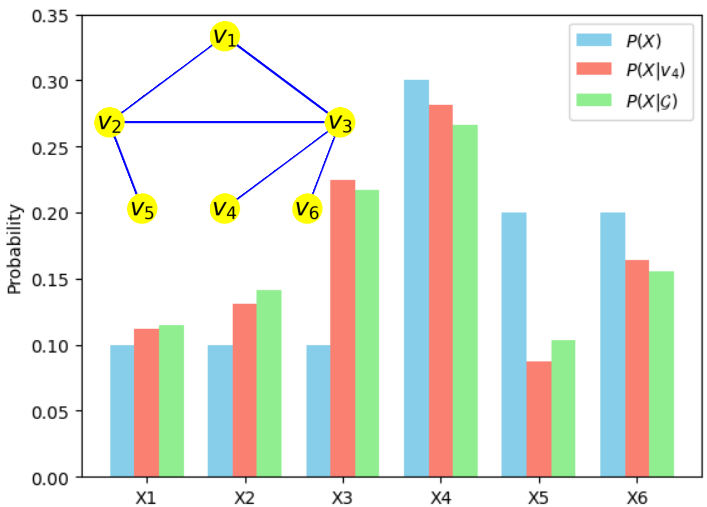}
   \caption{Illustration of an attributed graph $\mathcal{G} = (V, E, X)$ alongside the probability distribution of node attributes $X$. The inset graph represents the topology of $\mathcal{G}$, while the bar chart compares the prior distribution $P(X)$ (blue), the topology-influenced distribution from the point of view of node $v_4$, $P(X \mid v_4)$ (red), and the overall topology-conditioned distribution $P(X \mid \mathcal{G})$ (green). These distributions highlight how the topology of $\mathcal{G}$ shapes the points of view of $v_4$ regarding $P$.}
    \label{affdis}
\end{figure}

Attributed graphs contain both structural and attribute information, yet the influence of a graph's topology on the distribution of node attributes remains underexplored \citep{unquan, confpre,graphpost}. Understanding this interaction is crucial for downstream tasks such as representation learning and anomaly detection. In this paper, we propose an algebraic framework to quantify how a graph’s topology affects the distribution of node attributes. By integrating topology and distribution of node attributes, our approach provides enriched representations of graphs that benefit learning tasks.

Formally, consider an attributed graph $\mathcal{G} = (V, E, X)$, where $X$ denotes the set of node attributes governed by a prior distribution $P$. By treating the graph’s topology as evidence, we aim to derive the posterior distributions $P(\cdot \mid v)$ and $P(\cdot \mid \mathcal{G})$-capturing, respectively, the distribution of node attributes from the perspective of an individual node $v$ and the whole graph. This formulation allows us to model how topology affects attribute distribution, facilitating a principled fusion of structural and attribute information.

This integration is particularly relevant for \textit{Graph Anomaly Detection (GAD)}~\citep{bond, gad-nr}, where the goal is to detect anomalies based on deviations from an expected distribution of node attributes in the context of the graph. As illustrated in Figure \ref{affdis}, the presence of topology significantly alters these distributions. While node attributes $X_1, X_2$, and $X_3$ are equally probable in the prior, their probabilities diverge under the influence of topology, showing how $P(\cdot \mid \mathcal{G})$ encodes new information unavailable in $P$ alone. For this reason, we use GAD as a probing testbed: success in this task critically depends on whether topology-conditioned distributions are meaningfully captured. While our experiments focus on anomaly detection, our goal is not to introduce a new GAD architecture, but to evaluate a general framework for topology-conditioned probabilistic reasoning, which is applicable beyond this setting.

Central to our approach is the notion that each node in a graph has its own \textit{point of view}, a topologically centered perception of the graph structure. This perspective can be imagined as a seed planted at the node's location, from which the graph's topology radiates outward. Figure \ref{poin_of_view_grid} illustrates this concept by depicting the point of view of a node $v \in \mathcal{G}$ within the graph's topology. By formalizing and quantifying this idea, we can derive $P(\cdot \mid v)$ for each node and aggregate them into a global distribution $P(\cdot \mid \mathcal{G})$.

This idea echoes a well-known theme in mathematics: studying an object through its relationships with others. In \textit{category theory}, this is formalized using the notion of an \textit{under category}, which captures how an object ``views'' other objects via all morphisms originating from it~\cite{abstractandconcretecategories}. For example, in \textit{commutative algebra}~\cite{commutativealgebra}, one often studies rings from the point of view of a specific ring, leading to the development of the category of commutative algebras. Inspired by this precedent, we adopt a categorical perspective to study node viewpoints within a graph.

To formalize the points of view of nodes, we begin by associating a category with each graph. The classical \textit{free category}~\cite{maclane} generated by a graph, where nodes are objects and paths are morphisms, provides a natural way to represent the graph’s topology. However, its morphisms are purely structural and lack any quantifiable interpretation. To overcome this, we build upon the recently introduced \textit{Grothendieck Graph Neural Networks (GGNN) framework}~\cite{GGNN}, which provides an algebraic foundation for working with graphs. Within this framework, we introduce an enhanced version of the free category by defining monoidal elements that mirror paths, allowing their matrix representations to be derived through the monoidal homomorphism $\mathsf{Tr}$. This construction allows us to assign quantitative meaning to morphisms, enabling a more expressive representation of graph topology tailored to our probabilistic objectives. Our contributions are as follows:

\textbf{Topological Viewpoint Formalization:} We introduce a novel category associated with each graph that determines the graph up to isomorphism. Using \textit{under categories} derived from this structure, we formalize the point of view of each node regarding the graph’s topology.
    
 \textbf{Posterior Distribution Modeling:} We propose a method to compute the posterior distributions $P(\cdot \mid v)$ and $P(\cdot \mid \mathcal{G})$ by merging each node’s point of view regarding the graph’s topology with a probability distribution $P$ on node attributes. This is achieved by defining weights for directed edges induced by $P$.

  \textbf{Sufficiency Justification:} To demonstrate the sufficiency of our approach, we assess it on complete graphs under the assumption that their fully connected topology does not affect the distribution of node attributes.
    
\textbf{Induced Distribution (ID): a Stress-Test Instantiation:}
As a secondary contribution, to isolate and evaluate the expressive power of topology-conditioned distributions, we introduce an intentionally simple testbed model, ID, and use unsupervised graph anomaly detection as a probing task.

\begin{figure}[t]
    \centering
  \includegraphics[scale=0.37]{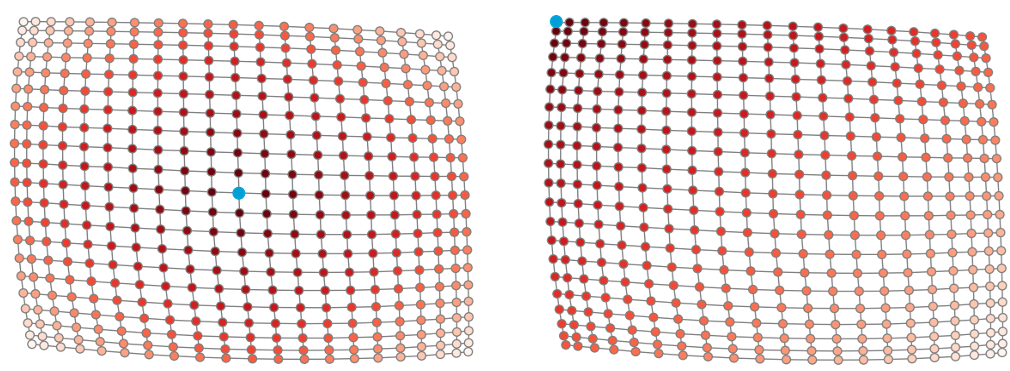}
   \caption{Visualization of the point of view of a node $v\in\mathcal G$ (highlighted in blue) with respect to the topology of the graph $\mathcal G$. The red color spectrum represents the influence or perspective of node $v$ on other nodes, with varying intensities indicating differing levels of topological proximity or relevance.}
    \label{poin_of_view_grid}
\end{figure}
\section{Preliminaries and notation}
\label{sec:formatting}

We consider undirected graphs denoted by $\mathcal{G} = (V, E)$ and attributed graphs denoted by $\mathcal{G} = (V, E, X)$, where $V$ is the set of nodes, $E$ is the set of edges, and $X$ is the set of node attributes.
Subgraphs and multigraphs are denoted by $\mathcal{D}$ and $\mathcal{M}$, respectively. A probability distribution over node attributes is denoted by $P$.
For matrices, we use uppercase bold symbols such as $\mA$. The entry at row $i$ and column $j$ is denoted $\mA_{i,j}$, the $i$-th row by $\mA_{i,:}$, and the $j$-th column by $\mA_{:,j}$.
Definitions-including \emph{monoid}, \emph{category}, \emph{under category}, and \emph{functor}-are provided in Appendix~\ref{cat}.

\paragraph{Grothendieck Graph Neural Networks (GGNN) Framework} 
The GGNN framework introduces two monoidal operations, denoted by $\bullet$ and $\circ$, along with four monoids:
{\small\[
(\mathsf{Mod}(\mathcal{G}), \bullet) \subseteq (\mathsf{SMult}(\mathcal{G}), \bullet), 
(\mathsf{Mom}(\mathcal{G}), \circ) \subseteq (\mathsf{Mat}_{|V|}(\mathbb{R}), \circ)
\]}
and a monoidal homomorphism $\mathsf{Tr}$ that maps elements of $\mathsf{Mod}(\mathcal{G})$ to their matrix representations. We summarize the relevant concepts from~\cite{GGNN} that are used throughout the paper:
\begin{definition}
Let $\mathcal{G} = (V, E)$ be an undirected graph.
\begin{itemize}
    \item[(1)] A \textbf{path} $p$ from node $v_{p_1}$ to node $v_{p_m}$ is an ordered sequence $v_{p_1}, e_{p_1}, v_{p_2}, e_{p_2}, \dots, v_{p_{m-1}}, e_{p_{m-1}}, v_{p_m}$, where $e_{p_i}$ represents an edge connecting nodes $v_{p_i}$ and $v_{p_{i+1}}$. Highlighting the edges between nodes, this definition of a path is applicable to multigraphs where multiple edges can exist between two nodes.

    \item[(2)] A \textbf{directed subgraph} $\mathcal{D}$ of $\mathcal{G}$ is a connected and acyclic subgraph of $\mathcal{G}$ where every edge of $\mathcal{D}$ is directed.

    \item[(3)] The monoid $\mathsf{SMult}(\mathcal{G})$ is the set of ordered pairs $(\mathcal{M}, S)$, where:
    \begin{itemize}
        \item $\mathcal{M} = \bigoplus_{i=1}^k \mathcal{D}_i$ is a multigraph obtained by taking the union of the sets of nodes and the disjoint union of the sets of directed edges from directed subgraphs $\mathcal{D}_i$.
        \item $S \subseteq \mathsf{Paths}(\mathcal{M})$ is a set of paths in $\mathcal{M}$.
    \end{itemize}
    The binary operation $\bullet$ on $\mathsf{SMult}(\mathcal{G})$ is defined as $(\mathcal{M}, S) \bullet (\mathcal{N}, T) = (\mathcal{M} \bigoplus \mathcal{N}, S \star T)$, where $S \star T$ is the union of $S$, $T$ (as subsets of $\mathsf{Paths}(\mathcal{M} \bigoplus \mathcal{N})$), and the collection of paths formed by composing paths in $S$ with paths in $T$.

    \item[(4)] $\mathsf{Mod}(\mathcal{G})$ is defined as the submonoid of $\mathsf{SMult}(\mathcal{G})$ generated by the set of ordered pairs $(\mathcal{D}, \mathsf{Paths}(\mathcal{D}))$, where $\mathcal{D}$ is a directed subgraph of $\mathcal{G}$ and $\mathsf{Paths}(\mathcal{D})$ is the set of all paths in $\mathcal{D}$ adhering to the directions. Where there is no risk of confusion, we denote $(\mathcal{D}, \mathsf{Paths}(\mathcal{D}))$ simply as $\mathcal{D}$. In particular, for a directed edge $e$, the element $(e, \{e\})$ of $\mathsf{Mod}(\mathcal{G})$ is abbreviated as $e$.

    \item[(5)] A \textbf{cover} for a graph $\mathcal{G}$ is a collection of finitely many elements of $\mathsf{Mod}(\mathcal{G})$.

    \item[(6)] The monoidal operation $\circ$ on $\mathsf{Mat}_n(\mathbb{R})$, the set of all $n \times n$ matrices, is defined as follows:
    \[
    \mA \circ \mB = \mA + \mB + \mA \mB.
    \]
\end{itemize}
\end{definition}

\section{Points of view of nodes to the graph's topology}

In this section, we propose a categorical framework to formalize the notion of \emph{points of view} of nodes with respect to the topology of a graph $\mathcal{G} = (V, E)$. By aggregating these viewpoints into a single element of the monoid $\mathsf{SMult}(\mathcal{G})$ and mapping this element to a matrix representation, we provide a quantitative interpretation of node-level perspectives.
\subsection{Categorical interpretation of points of view of nodes}

To represent the points of view of nodes categorically, we define a category associated with the graph $\mathcal{G}$. The objects of this category are the nodes of $\mathcal{G}$, and its morphisms are constructed from certain elements of $\mathsf{Mod}(\mathcal{G})$ as follows.

Let $e_1, \dots, e_m$ be a sequence of $m$ directed edges such that the target of $e_i$ coincides with the source of $e_{i+1}$. We call such edges \emph{composable}. Let $u$ denote the source of $e_1$ and $v$ the target of $e_m$. The monoidal element $e_1 \bullet \cdots \bullet e_m$ is then defined as a morphism from $u$ to $v$.

\begin{theorem}\label{cat(G)}
    For a graph $\mathcal{G} = (V, E)$, the structure $\mathsf{Cat}(\mathcal{G})$, whose objects are the nodes of $\mathcal{G}$ and whose morphisms are defined as above, forms a category.
\end{theorem}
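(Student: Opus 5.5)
We verify the category axioms directly: define composition, show it is well defined and associative, and produce identities. The key input is that $(\mathsf{Mod}(\mathcal{G}),\bullet)$ is a monoid, so $\bullet$ is associative; composition in $\mathsf{Cat}(\mathcal{G})$ will be inherited from $\bullet$.

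\textbf{Composition.} Let $f=e_1\bullet\cdots\bullet e_m$ be a morphism from $u$ to $v$ and $g=e'_1\bullet\cdots\bullet e'_n$ a morphism from $v$ to $w$. Then the target of $e_m$ is $v$, which is the source of $e'_1$. Hence $e_1,\dots,e_m,e'_1,\dots,e'_n$ is again a composable sequence, from $u$ to $w$. We define $g\circ f := f\bullet g = e_1\bullet\cdots\bullet e_m\bullet e'_1\bullet\cdots\bullet e'_n$, which is a morphism $u\to w$ by the definition above.

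\textbf{Well-definedness.} A given monoidal element might a priori arise from different composable sequences, so we must check that its source and target are determined by the element itself. For $f=(\mathcal{M},S)$, the sequence can be recovered: $\mathcal{M}$ is the disjoint union of the directed edges $e_i$, and $S$ contains the full path $e_1e_2\cdots e_m$. This is the unique longest path in $S$, because every path in $S$ is a composite of consecutive $e_i$'s, by induction on the definition of $S\star T$. Its endpoints give the source $u$ and target $v$. This identification of the longest path is the step I expect to need the most care. The alternative is to declare morphisms formally as triples $(u,f,v)$, which sidesteps the issue.

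\textbf{Associativity.} Composition is concatenation under $\bullet$, so associativity follows at once from associativity of $\bullet$ in the monoid $\mathsf{SMult}(\mathcal{G})$.

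\textbf{Identities.} For each node $v$, take the empty composable sequence ($m=0$), whose product is the monoid unit of $\mathsf{Mod}(\mathcal{G})$, namely the empty multigraph with no paths, regarded as a morphism $v\to v$. Write $1_v$ for it. Its defining property is that $1_v\bullet f=f=f\bullet 1_v$. To keep sources and targets unambiguous, we either tag the unit with $v$ or take it to be $(\{v\},\{v\})$, the trivial path at $v$. If we use the latter, we check that $\bullet$-multiplying by it leaves any $f$ with $v$ as an endpoint unchanged. This holds because adding an isolated vertex already in $\mathcal{M}$ and a length-zero path does not change the multigraph, and the composites it creates are just the existing paths.

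With composition, associativity and identities established, $\mathsf{Cat}(\mathcal{G})$ is a category.
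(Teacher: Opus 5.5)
Your main line is the paper's proof, and it is correct. Composition is concatenation under $\bullet$, associativity is inherited from the monoid, and the identity at $v$ is the unit $0=(\emptyset,\emptyset)$ of $\mathsf{Mod}(\mathcal{G})$ regarded as a morphism $0:v\to v$, i.e.\ tagged with $v$, which the paper leaves implicit.

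Two of your side remarks are wrong, although neither is needed for that route.

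\textbf{Paths need not use consecutive edges.} It is not true that every path in the $S$ of $e_1\bullet\cdots\bullet e_m$ is a composite of consecutive $e_i$'s. The operation $\star$ allows any $e_{p_1}\cdots e_{p_k}$ with $p_1<\cdots<p_k$; this is exactly where the $\binom{m}{k}$ in $\mathsf{MI}(m)$ comes from. For example, in $e\bullet e'\bullet e\bullet e'$ with $e:u\to v$ and $e':v\to u$, the path $e_1e_4$ lies in $S$. Your conclusion still holds: because indices strictly increase, the only path of length $m$ is $e_1\cdots e_m$, so source and target are determined by the element.

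\textbf{$(\{v\},\{v\})$ is not an identity.} By definition $S\star T\supseteq S\cup T$, so $f\bullet(\{v\},\{v\})$ has path set $S\cup\{v\}$. This differs from $S$, because products of the generators $(e,\{e\})$ never contain length-zero paths. If you want untagged identities, use $(\{v\},\emptyset)$ instead. It is a two-sided unit for every morphism having $v$ as an endpoint, and it keeps the hom-sets disjoint.
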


This category $\mathsf{Cat}(\mathcal{G})$ determines the graph up to isomorphism, as formalized below.

\begin{theorem}\label{iso_graphs_iso_cats}
    Two graphs $\mathcal{G}$ and $\mathcal{H}$ are isomorphic if and only if their corresponding categories $\mathsf{Cat}(\mathcal{G})$ and $\mathsf{Cat}(\mathcal{H})$ are isomorphic.
\end{theorem}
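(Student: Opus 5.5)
The plan is to prove the two directions separately, and to read off the graph from the category by isolating the edges as the \emph{indecomposable} non-identity morphisms.

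\textbf{Forward direction.} Suppose $\varphi\colon V(\mathcal{G})\to V(\mathcal{H})$ is a graph isomorphism. Then $\varphi$ induces a bijection on undirected edges, and hence on directed edges $e=(u\to v)\mapsto \varphi(e)=(\varphi(u)\to\varphi(v))$. This bijection preserves composability. I would define $F$ on objects by $\varphi$ and on morphisms by
\[
F(e_1\bullet\cdots\bullet e_m)=\varphi(e_1)\bullet\cdots\bullet\varphi(e_m).
\]
Well-definedness needs care, since a monoidal element might be expressible as several different composable strings. I would handle this by observing that $\varphi$ extends to an isomorphism of monoids $\mathsf{SMult}(\mathcal{G})\to\mathsf{SMult}(\mathcal{H})$. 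This extension relabels the nodes and edges of multigraphs and of paths, so it commutes with $\bigoplus$ and $\star$. $F$ is then just the restriction of this monoid isomorphism, which makes it well defined, functorial (composition is $\bullet$, as in Theorem~\ref{cat(G)}), and invertible with inverse induced by $\varphi^{-1}$.

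\textbf{Converse direction.} Let $F\colon\mathsf{Cat}(\mathcal{G})\to\mathsf{Cat}(\mathcal{H})$ be an isomorphism of categories. On objects $F$ is a bijection $V(\mathcal{G})\to V(\mathcal{H})$. I would characterize single directed edges purely categorically: a morphism $f$ is an edge if and only if $f$ is not an identity and cannot be written as $g\circ h$ with $g,h$ both non-identity. To show this, I would define a length $\ell(e_1\bullet\cdots\bullet e_m)=m$. This is well defined because the multigraph component $\bigoplus e_i$ has exactly $m$ directed edges counted with multiplicity, since edges are taken as a disjoint union. Length is additive under composition, and identities have length $0$ (the identity at $v$ is the empty composite, i.e.\ the unit of the monoid). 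Hence non-identity morphisms have $\ell\ge 1$, and those of length $1$ are exactly the indecomposables.

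Since $F$ preserves identities and composition, it preserves indecomposability. So $F$ maps the edges $u\to v$ of $\mathcal{G}$ bijectively onto the edges $F(u)\to F(v)$ of $\mathcal{H}$. Consequently $u,v$ are adjacent in $\mathcal{G}$ if and only if $F(u),F(v)$ are adjacent in $\mathcal{H}$, so the object map of $F$ is a graph isomorphism. Because the graphs are simple, adjacency alone determines the edge set.

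\textbf{Main obstacle.} I expect the main difficulty to be the bookkeeping around $\mathsf{Mod}$ elements: checking that $\ell$ is well defined and that identities are genuinely the only length-$0$ morphisms. This hinges on the disjoint-union convention in $\bigoplus$, so that $e\bullet e$ and $e$ are distinct elements with $2$ and $1$ edges respectively. It also requires specifying how Theorem~\ref{cat(G)} defines identity morphisms, and I would fix that convention explicitly at the start. A secondary point is loops. For $u=v$ there are no edges in a simple graph, so indecomposable endomorphisms do not exist and nothing further is needed.
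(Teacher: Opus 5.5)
Your proposal is correct and follows essentially the paper's route. The forward direction restricts the monoid isomorphism induced by the graph isomorphism; the paper cites $\mathsf{Mod}(f)$ from GGNN for this. The converse shows that single directed edges are exactly the non-identity morphisms that cannot be factored into two non-identity morphisms, so the isomorphism and its inverse carry edges $u\to v$ to edges $F(u)\to F(v)$. Your length function $\ell$ just makes explicit the additivity the paper uses implicitly when it concludes from $e = R(d_1)\bullet\cdots\bullet R(d_l)$ that only one factor can be non-trivial.
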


This result implies that $\mathsf{Cat}(\mathcal{G})$ fully encodes the topological structure of $\mathcal{G}$. Consequently, the under categories of nodes in $\mathsf{Cat}(\mathcal{G})$ offer a principled way to describe individual node perspectives.

\paragraph{Points of view of nodes} For a node $v \in V$, we define its \emph{point of view} on the topology of the graph as the under category $v / \mathsf{Cat}(\mathcal{G})$, denoted more simply by $v / \mathcal{G}$. The objects in $v / \mathcal{G}$ correspond to composable paths originating at $v$, representing the ways in which $v$ connects to the rest of the graph. Thus, $v / \mathcal{G}$ captures a topological perspective unique to node $v$.
Moreover, these perspectives are disjoint in the following sense: for distinct nodes $v_i, v_j \in V$ with $i \ne j$, the categories $v_i / \mathcal{G}$ and $v_j / \mathcal{G}$ share no objects in common.

\subsection{Aggregation of points of view of nodes}

Although each node possesses its own point of view, the graph’s connectivity introduces shared commonalities among these perspectives. For instance, a directed edge \( e: u \rightarrow v \) induces a transformation from \( v/\mathcal{G} \) to \( u/\mathcal{G} \) via composition, resulting in a functor. This interplay between neighboring nodes’ points of view is formalized in the following theorem:

\begin{theorem}\label{functor}
    Every directed edge \( e: u \rightarrow v \) induces a functor \( e \bullet - : v/\mathcal{G} \rightarrow u/\mathcal{G} \).
\end{theorem}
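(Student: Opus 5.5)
We will define the functor $e \bullet - : v/\mathcal{G} \rightarrow u/\mathcal{G}$ explicitly on objects and morphisms, and then verify the functor axioms. The argument relies on two facts about $\mathsf{Cat}(\mathcal{G})$ from Theorem~\ref{cat(G)}. First, composition of morphisms is the monoidal product $\bullet$ on $\mathsf{Mod}(\mathcal{G})$, which is associative. Second, a morphism is a product $e_1 \bullet \cdots \bullet e_m$ of composable directed edges.

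\textbf{Objects.} An object of the under category $v/\mathcal{G}$ is a morphism $f : v \to w$ in $\mathsf{Cat}(\mathcal{G})$, that is, $f = e_1 \bullet \cdots \bullet e_m$ with the source of $e_1$ equal to $v$. Since $e : u \to v$ has target $v$, the sequence $e, e_1, \dots, e_m$ is again composable, starting at $u$ and ending at $w$. Hence $e \bullet f : u \to w$ is a morphism of $\mathsf{Cat}(\mathcal{G})$, and therefore an object of $u/\mathcal{G}$. We set $(e\bullet -)(f) := e \bullet f$.

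\textbf{Morphisms.} A morphism in $v/\mathcal{G}$ from $f : v \to w$ to $f' : v \to w'$ is a morphism $g : w \to w'$ of $\mathsf{Cat}(\mathcal{G})$ with $f \bullet g = f'$. Here we use the diagrammatic order matching the convention that $e_1 \bullet \cdots \bullet e_m$ goes from the source of $e_1$ to the target of $e_m$. We send $g$ to the same $g$, now regarded as a morphism from $e\bullet f$ to $e \bullet f'$ in $u/\mathcal{G}$. To check that this is well defined, we need
\[
(e \bullet f)\bullet g = e \bullet (f \bullet g) = e \bullet f',
\]
which follows immediately from associativity of $\bullet$.

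\textbf{Functor axioms.} Because the functor acts as the identity on the underlying morphisms $g$, both axioms are essentially free. The identity of $f$ in $v/\mathcal{G}$ is the identity of $w$ in $\mathsf{Cat}(\mathcal{G})$, and this is also the identity of $e\bullet f$, whose codomain is still $w$. Composition in an under category is inherited from $\mathsf{Cat}(\mathcal{G})$, so it is preserved trivially.

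\textbf{Main obstacle.} The step needing most care is identities. The morphisms of $\mathsf{Cat}(\mathcal{G})$ are products of $m \ge 1$ edges, so the identity at each node must be whatever Theorem~\ref{cat(G)} uses. Most naturally this is the empty product, the unit $(\emptyset,\emptyset)$ of $\mathsf{Mod}(\mathcal{G})$, assigned to each node. With that choice, $e \bullet \mathrm{id}_v = e$ must be an object of $u/\mathcal{G}$. This holds because $e$ is the one-edge composable sequence, so the case $f = \mathrm{id}_v$ causes no trouble. We will state this identity convention explicitly, consistent with the proof of Theorem~\ref{cat(G)}, and check that the unit axiom there makes $e\bullet \mathrm{id}_v = e$ hold on the nose.
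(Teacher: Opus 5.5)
Your proposal is correct and takes essentially the paper's approach. The appendix contains no separate proof of this theorem; the paper only remarks that $e$ acts on $v/\mathcal{G}$ ``via composition.'' Your argument fills in that remark with the standard precomposition functor: $f \mapsto e \bullet f$ on objects, $g \mapsto g$ on morphisms, well-definedness from associativity of $\bullet$, and the identity case handled by the unit $0$ of $\mathsf{Mod}(\mathcal{G})$ used in the proof of Theorem~\ref{cat(G)}.
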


It is straightforward to verify that \( (c \bullet e) \bullet - = (c \bullet -)(e \bullet -) \), which implies that any path between two nodes reflects a degree of commonality in their points of view. This observation motivates the aggregation of these perspectives based on their commonalities to obtain unique perspectives for nodes. However, working directly with the categories \( v/\mathcal{G} \) is challenging, as each contains infinitely many objects of varying length. Formalizing the shared structure among such objects is nontrivial. To address this, we adopt the notion of \emph{covers} from the GGNN framework, constructing infinitely many covers, each comprising finitely many objects from the categories \( v/\mathcal{G} \).

\paragraph{Cover of points of view}
For \( m \in \mathbb{N} \), we define the cover
{\small\[\mathsf{Cov}(m)=\left\{ e_1 \bullet \cdots \bullet e_m \,\middle|\, e_1 \bullet \cdots \bullet e_m \in\bigcup_{v \in \mathcal{G}} \mathrm{ob}(v / \mathcal{G}) \right\}\]}
where \( \mathrm{ob}(v / \mathcal{G}) \) denotes the collection of objects in \( v / \mathcal{G} \). The cover \( \mathsf{Cov}(m) \) selects a finite number of elements uniformly across nodes by fixing the path length \( m \).

By definition of the operation \( \bullet \), an element \( e_1 \bullet \cdots \bullet e_m \) can be represented as an ordered pair \( (\mathcal{M}, S) \), where \( \mathcal{M} \) is the directed multigraph obtained from the disjoint union of the directed edges \( e_i \) for \( 1 \leq i \leq m \), and \( S \subseteq \mathsf{Paths}(\mathcal{M}) \) consists of paths \( p = e_{p_1}, e_{p_2}, \ldots, e_{p_k} \) in \( \mathcal{M} \) such that \( 1 \leq p_1 < p_2 < \cdots < p_k \leq m \).

Consequently, \( e_1 \bullet \cdots \bullet e_m \) contains information from \( e_1 \bullet \cdots \bullet e_{m-1} \), and hence \( \mathsf{Cov}(m) \) captures and extends the structure of \( \mathsf{Cov}(m') \) for any \( m' \leq m \). Thus, as \( m \) increases, \( \mathsf{Cov}(m) \) inherits more information from the union of all node perspectives.
Each cover \( \mathsf{Cov}(m) \) can be aggregated into the following element:
{\small
    \[\left( \bigoplus_{(\mathcal{M}_\alpha, S_\alpha) \in \mathsf{Cov}(m)} \mathcal{M}_\alpha, \bigcup_{(\mathcal{M}_\alpha, S_\alpha) \in \mathsf{Cov}(m)} \nu(S_\alpha) \right)\]}

where \( \nu(S_\alpha) \) denotes the image of \( S_\alpha \) as a subset of \( \mathsf{Paths}\left( \bigoplus_{(\mathcal{M}_\alpha, S_\alpha) \in \mathsf{Cov}(m)} \mathcal{M}_\alpha \right) \).
To reflect structural commonalities, we quotient this element by an equivalence relation defined as follows:

\begin{definition}
    For \( e, e' \in \bigoplus_{(\mathcal{M}_\alpha, S_\alpha) \in \mathsf{Cov}(m)} \mathcal{M}_\alpha \), we define \( e \sim e' \) if and only if \( e \in \mathcal{M}_{\alpha_0} \) and \( e' \in \mathcal{M}_{\alpha_1} \) have the same source and target nodes, and both are the \( i \)-th directed edge in their respective elements \( e_1 \bullet \cdots \bullet e_m = (\mathcal{M}_{\alpha_0}, S_{\alpha_0}) \) and \( e_1' \bullet \cdots \bullet e_m' = (\mathcal{M}_{\alpha_1}, S_{\alpha_1}) \).
\end{definition}

It is straightforward to verify that \( \sim \) is an equivalence relation. To compute the quotient element resulting from the division described earlier, it is important to introduce a key element of \(\mathsf{SMult}(\mathcal{G})\).

\paragraph{Graph \(\mathcal{G}\) as a monoidal element of \(\mathsf{SMult}(\mathcal{G})\).}
Let \( \mathsf{DE}(\mathcal{G}) \) denote the set of all directed edges in \( \mathcal{G} \). Then the element
{\small\[
\left( \bigoplus_{e \in \mathsf{DE}(\mathcal{G})} e, \mathsf{DE}(\mathcal{G}) \right) \in \mathsf{SMult}(\mathcal{G})
\]}
can be viewed as a monoidal representation of the graph \( \mathcal{G} \) itself. We denote this element by \( \mathcal{G} \).
The following theorem establishes the relationship between this monoidal element and the quotient structure obtained from \( \mathsf{Cov}(m) \):

\begin{theorem}\label{quotient}
    {\small\[
    \left( \bigoplus_{(\mathcal{M}_\alpha, S_\alpha) \in \mathsf{Cov}(m)} \mathcal{M}_\alpha, \bigcup_{(\mathcal{M}_\alpha, S_\alpha) \in \mathsf{Cov}(m)} \nu(S_\alpha) \right) / \sim\ 
    \cong \mathcal{G} \bullet \cdots \bullet \mathcal{G}
    \]}
    where \( \mathcal{G} \) appears \( m \) times.
\end{theorem}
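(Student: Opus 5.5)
We compare both sides by computing them explicitly in $\mathsf{SMult}(\mathcal{G})$ and then writing down a bijection between their multigraphs that also matches their path sets.

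\textbf{Right-hand side.} First we unfold $\mathcal{G}^{\bullet m}=\mathcal{G}\bullet\cdots\bullet\mathcal{G}$ using the definition of $\bullet$. Its multigraph is $\bigoplus_{i=1}^m \bigoplus_{e\in\mathsf{DE}(\mathcal{G})} e$. This is a disjoint union of $m$ copies $\mathsf{DE}(\mathcal{G})^{(i)}$ of the directed edges, so each edge comes with a layer index $i$. Iterating $\star$ by induction on $m$, the path set consists of all sequences $e^{(p_1)}_{1},\dots,e^{(p_k)}_{k}$ of composable directed edges with strictly increasing layer indices $p_1<\cdots<p_k$. A union with composition can never produce decreasing or repeated layers, because composition only concatenates a path from an earlier factor with one from a later factor. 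Associativity of $\bullet$ from the GGNN framework makes this well defined.

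\textbf{Left-hand side.} Each element $e_1\bullet\cdots\bullet e_m\in\mathsf{Cov}(m)$ is a pair $(\mathcal{M}_\alpha,S_\alpha)$. Here $\mathcal{M}_\alpha$ has edges $e_1,\dots,e_m$, and $S_\alpha$ contains the composable subsequences indexed by $p_1<\cdots<p_k$. Under $\sim$, an edge of $\mathcal{M}_\alpha$ is identified exactly by the pair (directed edge with given source and target, position $i$). So we define
\[
\Phi:\Bigl(\bigoplus_\alpha \mathcal{M}_\alpha\Bigr)/\!\sim\;\to\;\bigoplus_{i=1}^m \mathsf{DE}(\mathcal{G})^{(i)},\qquad [e_i \text{ in } \mathcal{M}_\alpha]\mapsto e_i^{(i)}.
\]
Two things need care here. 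For a multigraph $\mathcal{G}$, "same source and target" should be read as "same directed edge" of $\mathcal{G}$. If the target of the theorem is the simple-graph case, this distinction disappears. Also, the vertex sets of the two sides match once all nodes involved are taken into account.

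\textbf{Bijectivity of $\Phi$.} Injectivity is immediate from the definition of $\sim$. Surjectivity is the main obstacle: every directed edge $e$ must occur at every position $i\in\{1,\dots,m\}$ of some walk of length exactly $m$. We would argue this by building the walk around $e$ itself. Back-and-forth extension gives $e,\bar e,e,\dots$, and so on, which works as long as $\mathcal{G}$ has an edge. The walk $e_1,\dots,e_m$ with $e_i=e$ is then obtained by alternating $e$ and its reverse $\bar e$ on both sides. These are legitimate composable edges because $\mathcal{G}$ is undirected, so both orientations lie in $\mathsf{DE}(\mathcal{G})$. Isolated nodes carry no edges, so the edge-free graph case is trivial.

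\textbf{Path sets.} Finally we show $\Phi$ carries $\bigcup_\alpha\nu(S_\alpha)/\!\sim$ onto the path set of $\mathcal{G}^{\bullet m}$. Any path in some $S_\alpha$ maps to a composable sequence with increasing layers. Conversely, take a composable sequence $e^{(p_1)},\dots,e^{(p_k)}$ with $p_1<\cdots<p_k$. We must realize it inside a single walk of length $m$ that has $e^{(p_j)}$ at position $p_j$. The gaps between consecutive prescribed positions are filled by back-and-forth steps. A gap of even length $d=p_{j+1}-p_j-1$ returns to the same vertex, since a walk $w,\bar w,\dots$ of even length ends where it began. The odd case is a parity obstruction: an odd gap cannot be filled by back-and-forth steps alone. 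We would check whether the stated isomorphism intends the path sets only up to this walk realizability. If it does not, we would handle the issue by allowing in the gap any closed-walk substitute, for example $\bar e^{(p_j)}$ followed by $e^{(p_j)}$ repeated. We expect to reconcile this by interpreting paths in $\mathcal{M}_\alpha$ as subsequences, which the filler walks provide as long as the gap admits a closed walk of that length. This parity check is where we anticipate the real work, alongside the surjectivity step.
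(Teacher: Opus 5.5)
\textbf{The multigraph part of your proposal is right; the path-set part has a gap that cannot be closed, because the statement fails there.}

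Your handling of the multigraph component is correct and is the paper's argument. The map $[e]_i\mapsto e^{(i)}$ is well defined and injective by the definition of $\sim$. The alternating walks $e,\bar e,e,\dots$ and $\bar e,e,\bar e,\dots$ place every directed edge at every position $1,\dots,m$, which gives surjectivity.

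The gap is the path-set step you left open, and the parity obstruction you found is real. A path in $\bigl(\bigcup_\alpha\nu(S_\alpha)\bigr)/\sim$ must come from a \emph{single} walk $e_1\bullet\cdots\bullet e_m\in\mathsf{Cov}(m)$. So a layered path using consecutive positions $p_j<p_{j+1}$ forces $e_{p_j+1},\dots,e_{p_{j+1}-1}$ to be a closed walk of length exactly $p_{j+1}-p_j-1$ at the shared vertex. Your filler ($\bar e$ then $e$, repeated) only produces closed walks of even length. In a loopless graph there is no closed walk of length $1$ at all. Reading paths ``as subsequences'' changes nothing, since the paths in $S_\alpha$ already are exactly the composable subsequences.

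Here is a concrete counterexample. Let $\mathcal{G}$ be a single edge $\{u,v\}$ with $e:u\to v$, and take $m=3$.
\begin{itemize}
\item In $\mathcal{G}^3$ the paths from $u$ to $u$ are $e^{(1)}\bar e^{(2)}$, $e^{(1)}\bar e^{(3)}$ and $e^{(2)}\bar e^{(3)}$. This agrees with $(\mA\circ\mA\circ\mA)_{u,u}=3$.
\item $\mathsf{Cov}(3)=\{e\bullet\bar e\bullet e,\ \bar e\bullet e\bullet\bar e\}$. The only quotient classes from $u$ to $u$ are $[e]_1[\bar e]_2$ and $[e]_2[\bar e]_3$.
\item The class $[e]_1[\bar e]_3$ would require $e_2$ to be a loop at $v$, so it never occurs.
\item Counting all paths, the quotient has $12$ while $\mathcal{G}^3$ has $14$. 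So no isomorphism of the two elements exists, not merely your $\Phi$.
\end{itemize}
The same failure occurs for every loopless graph with an edge once $m\ge 3$: the path $e^{(1)}\bar e^{(3)}$ has no preimage.

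The paper's proof does not get past this point either. It describes quotient paths as sequences $[e_1]_{p_1}\cdots[e_k]_{p_k}$ with increasing positions and then asserts ``Clearly, $\phi$ is also surjective.'' That conflates the image of $\bigcup_\alpha\nu(S_\alpha)$ with the set of \emph{all} composable sequences of classes with increasing positions. Your parity check is exactly the step it skips.

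What you can actually prove is the following:
\begin{itemize}
\item $\phi$ is an isomorphism of multigraphs.
\item On paths, $\phi$ is an injection. Its image is the set of layered composable sequences whose internal gaps admit closed walks of exactly the gap length at the shared vertex. Prefixes and suffixes can always be filled by alternation.
\item This image is the full path set of $\mathcal{G}^m$ when $m\le 2$, or when every vertex carries a loop.
\end{itemize}
To get the theorem as written you would have to change a definition. For example, give the quotient the $\star$-closed path set, or build $\mathsf{Cov}(m)$ from walks in the graph with loops added. Rather than trying to resolve your parity check, report it as a counterexample to the path-set part of the statement.
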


We denote \( \mathcal{G} \bullet \cdots \bullet \mathcal{G} \) (with \( m \) repetitions) by \( \mathcal{G}^m \). Every element of \( \mathsf{Cov}(m) \) has a corresponding image in \( \mathcal{G}^m \), which therefore encodes a projection of each node's point of view.  
The element \( \mathcal{G}^m \) can be understood as a growing organism, where all paths \( e_1 \bullet \cdots \bullet e_m \) simultaneously evolve to length \( m \).
\subsection{Quantification of points of view; an enriched graph representation}

The monoidal homomorphism \(\mathsf{Tr}\), as defined in the GGNN framework, maps elements of \(\mathsf{Mod}(\mathcal{G})\) to matrices by counting paths between nodes. However, since the element \(\mathcal{G}^m\) is not constructed from directed subgraphs, it does not belong to \(\mathsf{Mod}(\mathcal{G})\), and hence \(\mathsf{Tr}\) is not directly applicable.
Instead, we adopt the path-counting functionality of \(\mathsf{Tr}\) and apply it to 
\(\mathcal{G}^m\), as formalized in the following theorem.
\begin{theorem}\label{mat rep of element}
    Suppose \(\mathcal{G}^m = (\mathcal{M}, S) \in \mathsf{SMult}(\mathcal{G})\). The number of paths in \(S\) from \(v_i\) to \(v_j\) equals \((\mA \circ \cdots \circ \mA)_{i,j}\) (\(m\)-times), where \(\mA\) is the adjacency matrix of \(\mathcal{G}\).
\end{theorem}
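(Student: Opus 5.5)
We will prove the statement by induction on \(m\), using the fact that \(\circ\) is designed to mirror the operation \(\star\) on path sets. The key identity to establish is that, for elements \((\mathcal{M},S)\) and \((\mathcal{N},T)\) of \(\mathsf{SMult}(\mathcal{G})\) whose edge sets are disjoint (as they are in \(\mathcal{M}\bigoplus\mathcal{N}\)), the path-count matrices satisfy
\[
N(S\star T)=N(S)+N(T)+N(S)\,N(T)=N(S)\circ N(T).
\]
Here \(N(S)_{i,j}\) denotes the number of paths in \(S\) from \(v_i\) to \(v_j\). This is the same counting principle that underlies \(\mathsf{Tr}\) being a monoidal homomorphism in the GGNN framework. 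Here we verify it directly, because \(\mathcal{G}^m\notin\mathsf{Mod}(\mathcal{G})\).

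\textbf{Base case.} The monoidal element \(\mathcal{G}=\bigl(\bigoplus_{e\in\mathsf{DE}(\mathcal{G})}e,\ \mathsf{DE}(\mathcal{G})\bigr)\) has as its paths exactly the single directed edges. Since \(\mathcal{G}\) is undirected, each edge \(\{v_i,v_j\}\) contributes both directed edges \(v_i\to v_j\) and \(v_j\to v_i\). Hence \(N(\mathsf{DE}(\mathcal{G}))_{i,j}=\mA_{i,j}\), and the claim holds for \(m=1\).

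\textbf{Inductive step.} Write \(\mathcal{G}^m=\mathcal{G}^{m-1}\bullet\mathcal{G}=(\mathcal{M}'\bigoplus\mathcal{M}_1,\ S'\star\mathsf{DE}(\mathcal{G}))\). By definition, \(S'\star\mathsf{DE}(\mathcal{G})\) is the union of three families:
\begin{itemize}
\item the paths in \(S'\);
\item the single edges in the new copy \(\mathcal{M}_1\);
\item the composites \(s\cdot e\), where \(s\in S'\) ends at the source of a new edge \(e\).
\end{itemize}
Because the multigraph is a disjoint union, each path in the composite set uses either edges only from \(\mathcal{M}'\), or only the new copy, or both. In the last case it splits uniquely as an \(S'\)-prefix followed by one new edge. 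Reverse-order splittings do not occur, because paths in \(\mathcal{G}^m\) respect the increasing copy index, as in the description \(p_1<\cdots<p_k\) given after the definition of \(\mathsf{Cov}(m)\). So the three families are pairwise disjoint, and composites are in bijection with pairs \((s,e)\) satisfying the matching condition.

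Summing over the intermediate node \(v_k\) gives the count of composites as \(\sum_k N(S')_{i,k}\mA_{k,j}=(N(S')\mA)_{i,j}\). Therefore
\[
N\bigl(S'\star\mathsf{DE}(\mathcal{G})\bigr)=N(S')+\mA+N(S')\mA=N(S')\circ\mA,
\]
and the induction hypothesis yields \((\mA\circ\cdots\circ\mA)_{i,j}\) with \(m\) factors. As a sanity check, this equals \(\bigl((\mI+\mA)^m-\mI\bigr)_{i,j}=\sum_{k=1}^m\binom{m}{k}(\mA^k)_{i,j}\). This matches choosing \(k\) of the \(m\) copies \(p_1<\cdots<p_k\) and a walk of length \(k\) through them.

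\textbf{Main obstacle.} The delicate step is the disjointness and unique-decomposition argument. It requires that \(\star\) contains no duplicate paths, which holds because the edges of \(\mathcal{M}'\) and \(\mathcal{M}_1\) are distinct copies. It also requires that \(\star\) contains only prefix-from-\(S'\), suffix-from-\(T\) compositions, not interleavings. I would address this by first fixing the explicit description of \(S\) for \(\mathcal{G}^m\): all nonempty composable sequences of edges drawn from copies with strictly increasing indices. I would prove this description itself by induction, and only then carry out the counting.
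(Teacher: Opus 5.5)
Your proposal is correct and follows the paper's own proof: induction on $m$ via $\mathcal{G}^m=\mathcal{G}^{m-1}\bullet\mathcal{G}$, splitting $T\star\mathsf{DE}(\mathcal{G})$ into three families (old paths, single new edges, and old-prefix-plus-new-edge composites), which gives the count $(\mA\circ\cdots\circ\mA)_{(m-1)} + \mA + (\mA\circ\cdots\circ\mA)_{(m-1)}\mA$. Your extra argument that the three families are disjoint and that each composite splits uniquely fills in a detail the paper leaves implicit. The separate induction on an explicit description of $S$ is not needed: disjointness and uniqueness already follow from two facts, that the edge copies in $\mathcal{M}'\bigoplus\mathcal{M}_1$ are disjoint and that $\star$ only forms composites $pq$ with $p$ taken from the first factor and $q$ from the second.
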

\paragraph{An enriched graph representation}
Motivated by Theorem~\ref{mat rep of element}, we define the matrix interpretation of \(\mathcal{G}^m\) to be \(\mA \circ \cdots \circ \mA\) (\(m\)-times), where \(\mA\) is the adjacency matrix of \(\mathcal{G}\). We \textbf{denote this matrix by} \(\mathsf{MI}(m)\).
The matrix \(\mathsf{MI}(m)\) quantitatively encodes each node's point of view on the graph topology at level \(m\). Specifically, the \(i\)-th row captures how node \(v_i\) perceives its structural environment via counts of length-\(m\) paths and their subpaths to others. The entry \(\mathsf{MI}(m)_{i,j}\) reflects the strength of connection between \(v_i\) and \(v_j\), enabling \(v_i\) to compare its affinities (e.g., between \(v_j\) and \(v_k\)) using \(\mathsf{MI}(m)_{i,j}\) and \(\mathsf{MI}(m)_{i,k}\). As \(m\) increases, the representation becomes increasingly rich, capturing deeper structural insights and enabling more informed comparisons. As a consequence of the following theorem, $\mathsf{MI}(m)$ can be computed as \(\mathsf{MI}(m)=( \mI+\mA)^m-\mI=\sum_{k=1}^m\binom{m}{k}\mA^k\).
\begin{theorem}\label{binomial}
    For a matrix $\mB$, we have
    \(\mB\circ\cdots\circ \mB(\textit{m-times })=( \mI+\mB)^m-\mI=\sum_{k=1}^m\binom{m}{k}\mB^k\)
\end{theorem}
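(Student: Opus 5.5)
The argument rests on one algebraic identity. For any matrices $\mA,\mB$ we have $\mI+(\mA\circ\mB)=\mI+\mA+\mB+\mA\mB=(\mI+\mA)(\mI+\mB)$. So the map $\phi(\mA)=\mI+\mA$ carries the operation $\circ$ to ordinary matrix multiplication. Equivalently, $\phi$ is a monoid homomorphism from $(\mathsf{Mat}_n(\mathbb{R}),\circ)$, whose identity is the zero matrix, to $(\mathsf{Mat}_n(\mathbb{R}),\cdot)$, whose identity is $\mI$. Before using this, I will note that $\circ$ is associative. This follows from the same identity together with the injectivity of $\phi$, or from a direct expansion: both bracketings give $\mA+\mB+\mC+\mA\mB+\mA\mC+\mB\mC+\mA\mB\mC$. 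Hence the $m$-fold product $\mB\circ\cdots\circ\mB$ is well defined without specifying parentheses.

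I will prove the first equality by induction on $m$. For $m=1$ both sides equal $\mB$. For the inductive step, write the $(m+1)$-fold product as $(\mB\circ\cdots\circ\mB)\circ\mB$. By the inductive hypothesis the first factor is $(\mI+\mB)^m-\mI$. Applying the key identity gives
\[
\mI+\bigl((\mI+\mB)^m-\mI\bigr)\circ\mB=(\mI+\mB)^m(\mI+\mB)=(\mI+\mB)^{m+1},
\]
and subtracting $\mI$ finishes the step.

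For the second equality, $\mI$ and $\mB$ commute, so the binomial theorem holds in the matrix ring. It gives $(\mI+\mB)^m=\sum_{k=0}^m\binom{m}{k}\mB^k$. The $k=0$ term is $\mI$, and removing it yields $\sum_{k=1}^m\binom{m}{k}\mB^k$.

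No step here is a real obstacle. The only point that needs care is stating associativity of $\circ$ explicitly, and the commutation of $\mI$ with $\mB$ that justifies the binomial expansion.
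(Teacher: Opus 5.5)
Your proof is correct, but it takes a different route from the paper's.

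\textbf{The paper's route.} The paper does not use the shift $\mA\mapsto\mI+\mA$. It imports from the GGNN paper the general expansion
\[
\mA_1\circ\cdots\circ\mA_m=\sum_{j=1}^{m}\sum_{\sigma\in O(m,j)}\mA_{\sigma_1}\cdots\mA_{\sigma_j},
\]
where $O(m,j)$ is the set of strictly increasing length-$j$ sequences in $\{1,\dots,m\}$. It then sets every $\mA_i=\mB$ and counts the $\binom{m}{k}$ such sequences of length $k$. This gives $\sum_{k=1}^m\binom{m}{k}\mB^k$ first, and $(\mI+\mB)^m-\mI$ follows from the binomial theorem.

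\textbf{What your route buys.} You argue in the opposite order, and your argument is self-contained. Your identity $\mI+\mA\circ\mB=(\mI+\mA)(\mI+\mB)$ is exactly why the GGNN expansion holds: expanding $\prod_{i=1}^m(\mI+\mA_i)-\mI$ yields it immediately. The same identity gives you associativity of $\circ$, which the paper uses without comment. It also shows that $\mA\mapsto\mI+\mA$ is in fact a monoid isomorphism from $(\mathsf{Mat}_n(\mathbb{R}),\circ)$ to the multiplicative monoid.

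\textbf{What the paper's route buys.} It gives a combinatorial reading that matches the paper's path-counting interpretation. The term $\mB^k$ with multiplicity $\binom{m}{k}$ corresponds to paths $e_{p_1},\dots,e_{p_k}$ in $\mathcal{G}^m$ with $p_1<\cdots<p_k$. In other words, it counts the choices of which $k$ of the $m$ copies of $\mathcal{G}$ a path passes through, and this is the link to Theorem~\ref{mat rep of element}.
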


\section{Merging points of view of nodes with the distributions of node attributes}
In this section, we develop a method for incorporating the topology of an attributed graph \(\mathcal{G} = (V, E, X)\) into the analysis of a categorical distribution \(P\) over node attributes \(X\). Specifically, we construct quantitative representations of how each node "perceives" \(P\) through its structural position. This allows us to propose interpretable formulations for both \(P(\cdot \mid \mathcal{G})\) and \(P(\cdot \mid v_j)\). We also validate the approach under a natural structural assumption inspired by complete graphs.
\subsection{Quantification of Points of View on Distributions}
We incorporate the distribution \(P\) into the points of view of nodes to the topology by defining carefully selected weights for the directed edges, resulting in a quantification of points of view of nodes on $P$.

\begin{definition}\label{induced-weights}
    For an attributed graph \(\mathcal{G} = (V, E, X)\) with adjacency matrix \(\mA\) and a categorical distribution \(P\) of \(X\), where \(P(x_j) = P_j\) and \(0 \leq \theta\leq 1\), we define the following:
    \begin{itemize}
        \item The matrix of induced weights from \(P\) of degree \(\theta\) is a \(|V| \times |V|\) matrix \(\mW\), where
        \[
        \mW_{i,j} = 
        \begin{cases} 
        \frac{P_j}{1 - P_j} \prod_{r = 1}^{|V|} (1 - P_r)^\theta & \text{if } \mA_{i,j} = 1, \\
        0 & \text{if } \mA_{i,j} = 0.
        \end{cases}
        \]
        \item The \(P\)-distributional matrix interpretation of \(\mathsf{Cov}(m)\) of degree \(\theta\), denoted by \(\mathsf{DMI}(P, m, \theta)\), is given by \(\mW \circ \cdots \circ \mW\).
    \end{itemize}
\end{definition}

\paragraph{A more enriched graph representation}
The matrix \(\mathsf{DMI}(P, m, \theta)\) provides an enriched representation of a graph by fusing topological structure with the distribution of node attributes. It extends the matrix \(\mathsf{MI}(m)\) by incorporating the attribute distribution \(P\) into the adjacency structure through the weighted matrix \(\mW\), as defined earlier. According to Theorem~\ref{binomial}, this matrix is computed as:
\(
\mathsf{DMI}(P, m, \theta) = \sum_{k = 1}^{m} \binom{m}{k} \mW^k.
\)

In our search for effective weighting schemes, we primarily identified two useful forms: 
\(\frac{P_j}{1 - P_j}\) (corresponding to \(\theta = 0\)) and \(\prod_{r \ne j}(1 - P_r) - \prod_{r = 1}^{|V|}(1 - P_r)\) (corresponding to \(\theta = 1\)). While the former has a stronger theoretical grounding (see Theorem~\ref{dist on complete graphs}), the latter showed favorable empirical performance.
We therefore proposed a general formulation parameterized by \(0\leq\theta \leq 1\), which interpolates between these behaviors. The scalar factor \(\prod_{r=1}^{|V|}(1 - P_r)^\theta\) serves as a global modulation term. Since this quantity lies in the interval \((0, 1]\), increasing \(\theta\) results in a uniform contraction of all edge weights, thereby dampening the contribution of longer paths. Each entry of \(\mW^k\) is scaled by \((\prod_{r=1}^{|V|}(1 - P_r)^\theta)^k\), giving \(\theta\) precise control over the influence of paths of different lengths.
Thus, the parameter \(\theta\) enables the representation to interpolate between \emph{extroverted} viewpoints (small \(\theta\)) and \emph{introverted} viewpoints (large \(\theta\)).

\paragraph{Posterior distribution modeling via points of view}
We are now ready to propose an interpretation of the posterior distributions \(P(\cdot \mid\ \mathcal{G})\) and \(P(\cdot \mid\ v)\). Our approach defines approximations to these distributions using two tunable parameters: the level \(m\), and the degree \(\theta\). As \(m\) increases, the approximation inherits more information from the interaction between topology and node attributes distribution. 
The following definition formalizes what we term the \emph{point of view} of a node and of the entire graph with respect to the categorical distribution \(P\). These perspectives are modeled as normalized rows and row sums of the matrix \(\mathsf{DMI}(P, m, \theta)\), respectively, and serve as approximations of \(P(\cdot \mid\ v)\) and \(P(\cdot \mid\ \mathcal{G})\).

\begin{definition}\label{posterior}
For an attributed graph $\mathcal{G} = (V, E, X)$ together with a categorical distribution $P$ of $X$, let $\mM = \mathsf{DMI}(P, m, \theta)$. We define the point of view of node $v_i$ with respect to the distribution $P$ of level $m$ and degree $\theta$, denoted by \(\mathsf{pov}(v_i, P, m, \theta)\), as an approximation of \(P(\cdot \mid v_i)\), and the point of view of the graph $\mathcal{G}$ with respect to the distribution $P$ of level $m$ and degree $\theta$, denoted by \(\mathsf{pov}(\mathcal{G}, P, m, \theta)\), as an approximation of \(P(\cdot \mid \mathcal{G})\), to be the following categorical distributions of node attributes:
\[
\mathsf{pov}(v_i, P, m, \theta) = \frac{\mM_{i,:}}{\lVert \mM_{i,:} \rVert_1}
\]
\[\mathsf{pov}(\mathcal{G}, P, m, \theta) = \frac{\sum_{i=1}^{|V|}\mathsf{pov}(v_i, P, m, \theta)}{|V|}\]
The \textbf{matrix of points of view}, denoted by \(\mathsf{POV}(P, m, \theta)\), is the matrix whose $i$-th row equals \(\mathsf{pov}(v_i, P, m, \theta)\).
\end{definition}
Utilizing the distributions $\mathsf{pov}(v_i,P,m,\theta)$ and $\mathsf{pov}(\mathcal{G},P,m,\theta)$, we extend the concept of the \textit{mean} for attributed graphs, which plays a central role in the design of our model for GAD.

\begin{definition}\label{affected_mean}
    For an attributed graph $\mathcal{G}=(V,E,X)$ together with a categorical distribution $P$ of $X$, we define the mean of $X$ from the point of view of $v_i$ of level $m$ and degree $\theta$, denoted by \(\mathsf{Mean}(v_i,P,m,\theta)\), and the mean of $X$ from the point of view of $\mathcal{G}$ of level $m$ and degree $\theta$, denoted by \(\mathsf{Mean}(\mathcal{G},P,m,\theta)\) as:
    \[\mathsf{Mean}(v_i,P,m,\theta)=\sum_{k=1}^{|V|}[\mathsf{pov}(v_i,P,m,\theta)]_{_k} x_k\]
    \[\mathsf{Mean}(\mathcal{G},P,m,\theta)=\sum_{k=1}^{|V|}[\mathsf{pov}(\mathcal{G},P,m,\theta)]_{_k} x_k\]
\end{definition}
The following example provides an intuitive illustration of point-of-view operators by showing how an agent can reason about a rumor source using only the topology of the graph.

\begin{figure}
    \centering
  \includegraphics[scale=0.55]{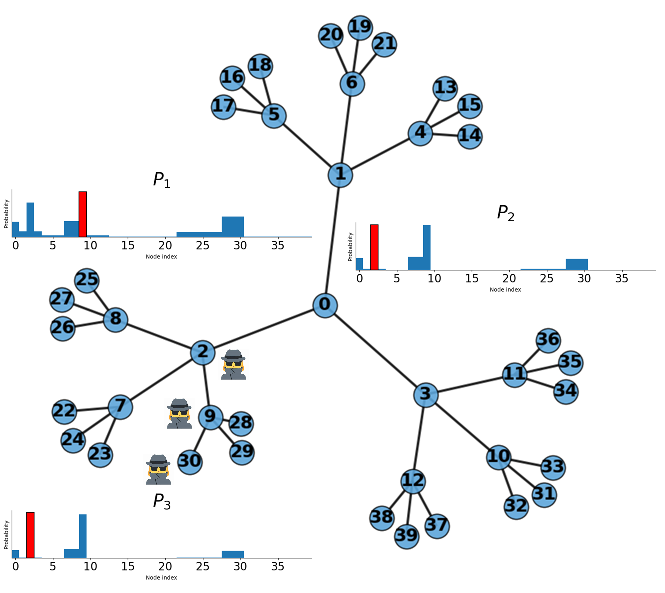}
   \caption{\textbf{Iterative localization of a rumor source via point of view}: Starting from a uniform belief, the agent repeatedly applies $\mathsf{pov}$, producing distributions $P_1, P_2, P_3$. At each stage, the agent moves to the maximizer of the current distribution (from $v_{30}$ to $v_9$, then to $v_2$), where the process stabilizes.
}
    \label{rumor}
\end{figure}

\begin{example}[Rumor source detection via point of view]

Figure \ref{rumor} shows a graph composed of several clusters connected by articulation nodes. 
Suppose an agent observes a rumor at node $v_{30}$ and wants to identify its source. 
Intuitively, the source is more likely to lie within the same region of the graph than in distant clusters.

\smallskip
\noindent\textbf{Stage 1.}
The agent starts with a discrete uniform belief $Q_1$ over all nodes. 
At node $v_{30}$, it computes
\[
P_1=\mathsf{pov}(v_{30}, Q_1, 70, 0).
\]
From $P_1$, the agent selects the next node by maximizing the distribution and finds $v_9$ to be the most likely source.

\smallskip
\noindent\textbf{Stage 2.}
The agent moves to $v_9$ and updates its belief and the associated point of view by computing
\[
Q_2 = \mathsf{pov}(v_9, Q_1, 70, 0), 
\qquad 
P_2=\mathsf{pov}(v_9, Q_2, 70, 0).
\]
From $P_2$, the agent again selects the next node and identifies $v_2$ as the most likely source.

\smallskip
\noindent\textbf{Stage 3.}
The agent moves to $v_2$ and computes
\[
Q_3 = \mathsf{pov}(v_2, Q_2, 70, 0), 
\qquad 
P_3=\mathsf{pov}(v_2, Q_3, 70, 0).
\]
The maximum of $P_3$ remains at $v_2$, so the process stabilizes and the agent reports $v_2$ as the most suspicious node.

\smallskip
This result matches the topology: since the rumor is only observed in the subgraph containing $v_{30}$, nodes in other clusters are less likely to be the source. In particular, although $v_0$ is a central articulation node, it lies outside the observed cluster and is therefore less suspicious than $v_2$.

\end{example}

\subsection{Sufficiency of proposed posterior distributions}\label{sec-suff}
We have proposed $\mathsf{pov}(v_i, P, m, \theta)$ and $\mathsf{pov}(\mathcal{G}, P, m, \theta)$ as approximations to $P(\cdot \mid v_i)$ and $P(\cdot \mid \mathcal{G})$, respectively. To evaluate the sufficiency of our approach, we consider the case of complete graphs as a natural and principled benchmark.
In a complete graph, every node is connected to every other node. This uniform and symmetric structure ensures that no individual node plays a privileged topological role. Thus, such graphs are topologically indistinguishable from sets when it comes to information aggregation. Consequently, the topology of a complete graph should not influence the distribution of node attributes.

\textbf{Assumption.} \emph{The topology of a complete graph does not affect the distribution of node attributes.}

This assumption serves as a boundary condition for evaluating the fidelity of our approach. Since a complete graph provides no meaningful structural constraints, a topology-aware approach that respects this condition should reproduce the original attribute distribution with minimal distortion.
Our approach involves transforming the topology of a graph into a form that interacts with the distribution of node attributes. It is essential that this transformation introduces only information related to the topology, without altering the statistical structure of the attributes in unexpected ways.
The following theorem confirms that our method respects this principle. As the level parameter \(m\) increases, the point of view distributions for nodes and the entire graph converge to the true attribute distribution:

\begin{theorem}\label{dist on complete graphs}
Let $\mathcal{G} = (V, E, X)$ be a complete attributed graph, and let $P$ be a categorical distribution over $X$, where $P(x_i) = P_i$ for each $x_i \in X$. Then for every node $v \in V$,
\[
\lim_{m \to \infty} \mathsf{pov}(v, P, m, 0) = (P_1, P_2, \dots, P_{|V|}), \quad \text{and}\]
\[\lim_{m \to \infty} \mathsf{pov}(\mathcal{G}, P, m, 0) = (P_1, P_2, \dots, P_{|V|}),
\]
\end{theorem}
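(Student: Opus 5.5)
The plan is to compute $\mathsf{DMI}(P,m,0)$ explicitly for a complete graph and reduce the limit to a Perron--Frobenius statement about a single positive matrix. For a complete graph on $n=|V|\ge 2$ nodes the adjacency matrix is $\mA=\boldsymbol{J}-\mI$, where $\boldsymbol{J}$ is the all-ones matrix. With $\theta=0$, Definition~\ref{induced-weights} gives $\mW_{i,j}=q_j:=P_j/(1-P_j)$ for $i\neq j$ and $\mW_{i,i}=0$. In matrix form this is $\mW=(\boldsymbol{J}-\mI)\boldsymbol{D}$ with $\boldsymbol{D}=\mathrm{diag}(q_1,\dots,q_n)$.

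By Theorem~\ref{binomial}, $\mM=\mathsf{DMI}(P,m,0)=(\mI+\mW)^m-\mI$. Write $\mB:=\mI+\mW=\mI-\boldsymbol{D}+\boldsymbol{1}q^{\top}$. Its diagonal entries equal $1$ and its off-diagonal entries in column $j$ equal $q_j$.

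The key step is to show that $P^{\top}$ is a left eigenvector of $\mB$ with eigenvalue $2$. The $j$-th entry of $P^{\top}\mB$ is
\[
P_j-P_jq_j+\Big(\sum_r P_r\Big)q_j
= P_j+\frac{P_j(1-P_j)}{1-P_j}
= 2P_j ,
\]
using $\sum_r P_r=1$.

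I will first assume $0<P_j<1$ for all $j$. Then $\mB$ is entrywise positive, and $P$ is a positive left eigenvector of $\mB$. By Perron--Frobenius, a positive eigenvector can only belong to the Perron root, so $\rho(\mB)=2$, this eigenvalue is simple, and every other eigenvalue has modulus strictly less than $2$. Hence
\[
\mB^m/2^m\to \boldsymbol{r}P^{\top},
\]
where $\boldsymbol{r}>0$ is the right Perron vector normalized so that $P^{\top}\boldsymbol{r}=1$. Since $\mI/2^m\to 0$, we also get $\mM/2^m\to \boldsymbol{r}P^{\top}$.

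Next I pass to the normalized rows. The $i$-th row of $\mM/2^m$ converges to $r_iP^{\top}$, and $r_i>0$. Normalization by the $\ell_1$ norm is invariant under positive scaling and continuous away from $0$. Therefore
\[
\mathsf{pov}(v_i,P,m,0)=\frac{\mM_{i,:}}{\lVert\mM_{i,:}\rVert_1}\longrightarrow \frac{r_iP^{\top}}{r_i\lVert P\rVert_1}=P^{\top},
\]
since $\lVert P\rVert_1=1$. The statement for $\mathsf{pov}(\mathcal{G},P,m,0)$ then follows immediately, because it is the average of the $n$ node-level distributions, each of which tends to $P$.

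The main obstacle is the boundary case, which is where the argument needs care beyond the computation above.
\begin{itemize}
\item If $P_j=1$ for some $j$, then $q_j$ is undefined, so this case must be excluded; the weights are only defined when every $P_j<1$.
\item If some $P_j=0$, then column $j$ of $\mB$ is $\boldsymbol{e}_j$ and positivity fails. I would handle this by permuting so that the support $S=\{j:P_j>0\}$ comes first. Then $\mB$ is block upper triangular, $\begin{pmatrix}\mB_{SS}&*\\0&\mI\end{pmatrix}$, and the same argument applies to the positive block $\mB_{SS}$ with eigenvalue $2$.
\item In that case the entries of $(\mI+\mW)^m$ in the zero-probability columns grow at most polynomially in $m$. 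After dividing by $2^m$ they vanish, so the limit is still $P$ with zeros in the coordinates outside $S$.
\end{itemize}
If the paper intends $P_j\in(0,1)$ throughout, the positive-matrix argument alone suffices.
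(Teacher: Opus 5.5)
Your argument is correct for $0<P_j<1$, and it takes a genuinely different route from the paper.

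\textbf{How the two routes differ.} The paper works with weighted paths. It writes $\mM^{(m)}_{i,j}$ (with $\mM^{(m)}=\mathsf{DMI}(P,m,0)$) as a sum of $\pi(p)$, the product of edge weights, over paths $p$ of $\mathcal{C}^m$ ending at $v_j$. It then uses Lemma~\ref{lem}, i.e.\ $q_j=P_j+P_jq_j$, to peel off the last edge, obtaining $\mM^{(m)}_{i,j}=P_j\bigl(\sum_p\pi(\bar p)+\mM^{(m)}_{i,j}\bigr)$, where $\bar p$ is $p$ with its last edge removed. Finally, inside the limit, it replaces $\sum_p\pi(\bar p)$ by the total weight of paths ending outside $v_j$, on the grounds that the two families of paths have the same union over $m$.

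You encode the same identity $q_j(1-P_j)=P_j$ as the statement that $P^{\top}$ is a left eigenvector of $\mB=\mI+\mW$ with eigenvalue $2$. You then let Perron--Frobenius handle the limit.

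\textbf{What each route buys.} The paper's version keeps the path-counting meaning of $\mathcal{G}^m$ in view. However, its replacement step is where the real work lies. Both weighted sums diverge, and truncated paths occur with multiplicity, so an equality of unions does not by itself control the ratios to $\lVert\mM^{(m)}_{i,:}\rVert_1$. Grouping by the index of the last edge gives $\sum_p\pi(\bar p)=\sum_{t=0}^{m-1}\sum_{u\neq j}(\mB^t)_{i,u}$. This is asymptotic to $\sum_{u\neq j}\mM^{(m)}_{i,u}$ exactly because $\sum_{t<m}2^t/2^m\to 1$, i.e.\ because the dominant eigenvalue is $2$. That is precisely what your eigenvector computation establishes. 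Your route also yields the limit matrix $\boldsymbol{r}P^{\top}$ and a geometric convergence rate.

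\textbf{A correction to your boundary case.} If $P_j=0$, then column $j$ of $\mB$ is $\boldsymbol{e}_j$. With $S$ listed first and $S^{c}$ its complement, the zero block is the one with rows in $S$ and columns in $S^{c}$:
$\mB=\begin{pmatrix}\mB_{SS}&0\\ \ast&\mI\end{pmatrix}$.
So $\mB$ is block lower triangular, not upper triangular. This has three consequences:
\begin{itemize}
\item The $S^{c}$ columns of $\mM$ vanish identically; they do not merely grow polynomially.
\item Rows $i\in S$ are governed by $\mB_{SS}^m$, as you say.
\item Rows $i\in S^{c}$ are not covered by your sketch. Their $S$-part is $\mB_{S^{c}S}\sum_{k<m}\mB_{SS}^{k}$. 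Under the block form you wrote, these rows of $\mM$ would be zero and $\mathsf{pov}$ would be undefined.
\end{itemize}

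The simplest repair is to argue with the full matrix.
\begin{itemize}
\item Its spectrum is that of $\mB_{SS}$ together with $1$, so $2$ is still simple and strictly dominant.
\item $P^{\top}$ is still a left eigenvector; your computation never used positivity.
\item $\boldsymbol{r}=(1-P_1,\dots,1-P_{|V|})^{\top}$ is an entrywise positive right eigenvector, again because $q_j(1-P_j)=P_j$.
\end{itemize}
Hence $\mB^m/2^m\to\boldsymbol{r}P^{\top}/(P^{\top}\boldsymbol{r})$, and every row normalizes to $P$ whether or not some $P_j$ vanish.
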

To further substantiate our approach, we show that the desirable behavior established in Theorem~\ref{dist on complete graphs} for complete graphs extends naturally to arbitrary graphs. Since $\mathcal{G}^m$ plays a central role in our framework, we demonstrate this extension by emphasizing the structural relationship between elements $\mathcal{G}^m \in \mathsf{SMult}(\mathcal{G})$ and $\mathcal{C}^m \in \mathsf{SMult}(\mathcal{C})$, where $\mathcal{C} \supseteq \mathcal{G}$ denotes the complete graph obtained by adding all missing edges to $\mathcal{G}$.
This relationship is formalized through a \textit{Galois connection} between $\mathsf{SMult}(\mathcal{G})$ and $\mathsf{SMult}(\mathcal{C})$. We first show that the monoid $\mathsf{SMult}(\mathcal{G})$ admits a partial order:
\begin{theorem}\label{partial-order}
    For an arbitrary graph $\mathcal{G}$, the relation $\le$ on $\mathsf{SMult}(\mathcal{G})$ defined as follows:
    \[(\mathcal{M},S) \le (\mathcal{N},T) \iff \exists \iota:\mathcal{M} \hookrightarrow \mathcal{N}, \ \iota(S) \subseteq T\]
    is a partial order on $\mathsf{SMult}(\mathcal{G})$, compatible with the monoidal operation $\bullet$.
\end{theorem}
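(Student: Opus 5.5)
We verify the three order axioms and then compatibility with $\bullet$. Throughout, "$\iota:\mathcal{M}\hookrightarrow\mathcal{N}$" means an injective map of directed multigraphs that is the identity on nodes (both are multigraphs on vertex set $V$ whose directed edges come from $\mathcal{G}$). It sends each directed edge of $\mathcal{M}$ to a directed edge of $\mathcal{N}$ with the same source and target. It therefore induces an injective map on paths, still written $\iota$, and the condition $\iota(S)\subseteq T$ makes sense.

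\textbf{Reflexivity and transitivity.} Reflexivity is immediate: take $\iota=\mathrm{id}_{\mathcal{M}}$. For transitivity, suppose $\iota:\mathcal{M}\hookrightarrow\mathcal{N}$ with $\iota(S)\subseteq T$ and $\kappa:\mathcal{N}\hookrightarrow\mathcal{L}$ with $\kappa(T)\subseteq U$. Then $\kappa\iota$ is again an injective, node-preserving embedding. Since the induced map on paths is functorial, $(\kappa\iota)(S)=\kappa(\iota(S))\subseteq\kappa(T)\subseteq U$.

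\textbf{Antisymmetry.} This is the step I expect to be the main obstacle, because $\le$ is only defined up to the choice of embedding. Suppose $\iota:\mathcal{M}\hookrightarrow\mathcal{N}$ and $\kappa:\mathcal{N}\hookrightarrow\mathcal{M}$ satisfy $\iota(S)\subseteq T$ and $\kappa(T)\subseteq S$.
\begin{itemize}
\item Both multigraphs are finite, so counting directed edges between each ordered pair of nodes shows that $\iota$ is a bijection on edges, hence an isomorphism of multigraphs.
\item Since $\iota$ and $\kappa$ are injective on paths and $S,T$ are finite sets of paths, $|S|\le|T|\le|S|$. Hence $\iota(S)=T$.
\end{itemize}
Thus $(\mathcal{M},S)$ and $(\mathcal{N},T)$ are equal up to a relabeling of parallel edges. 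I would make explicit that elements of $\mathsf{SMult}(\mathcal{G})$ are taken up to exactly such edge-relabeling isomorphisms. This is the same identification already implicit in the paper's use of $\cong$ in Theorem~\ref{quotient}, and it ensures that $\bullet$ is well defined on disjoint unions. With this convention antisymmetry holds. If one preferred not to quotient, one would only obtain a preorder, so stating the convention is essential.

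\textbf{Compatibility with $\bullet$.} Assume $(\mathcal{M},S)\le(\mathcal{N},T)$ via $\iota$, and let $(\mathcal{K},R)$ be any element.
\begin{itemize}
\item Define $\iota\oplus\mathrm{id}:\mathcal{M}\oplus\mathcal{K}\hookrightarrow\mathcal{N}\oplus\mathcal{K}$. It is well defined and injective because $\oplus$ is a disjoint union on edges.
\item Recall that $S\star R$ consists of $S$, $R$, and all composites $s\cdot r$ and $r\cdot s$ of a path in $S$ with a path in $R$ whose endpoints match.
\item The induced path map sends $s\mapsto\iota(s)\in T$ and $r\mapsto r$. Because it preserves endpoints, it sends each composite $s\cdot r$ to $\iota(s)\cdot r$, which is a composite of a path in $T$ with a path in $R$.
\end{itemize}
Hence $(\iota\oplus\mathrm{id})(S\star R)\subseteq T\star R$, which gives
\[
(\mathcal{M},S)\bullet(\mathcal{K},R)\le(\mathcal{N},T)\bullet(\mathcal{K},R).
\]
The symmetric argument gives left compatibility, $(\mathcal{K},R)\bullet(\mathcal{M},S)\le(\mathcal{K},R)\bullet(\mathcal{N},T)$. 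Combining the two by transitivity yields $a\le b,\ c\le d\Rightarrow a\bullet c\le b\bullet d$.
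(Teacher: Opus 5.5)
\textbf{Overall.} The paper's proof only writes out what compatibility with $\bullet$ means and calls the rest straightforward, so your direct check is the verification it leaves implicit. Reflexivity, transitivity and compatibility are fine. Your remark that antisymmetry requires identifying elements up to node-fixing relabelings of parallel edges is correct, and the paper passes over it. For example, if $\mathcal{M}$ contains two parallel copies $a,b$ of the same directed edge, then $(\mathcal{M},\{a\})$ and $(\mathcal{M},\{b\})$ are distinct elements, yet each is $\le$ the other via the swap. One small inaccuracy: in the paper, $S\star R$ contains only composites $s\cdot r$, a path of the first factor followed by a path of the second, and not $r\cdot s$. This is how it is used in the path-counting proof and in the proof that $\mathsf{R}$ is a homomorphism. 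Your compatibility argument goes through unchanged either way.

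\textbf{The gap: finiteness of $S$ and $T$.} The one step that does not hold as written is in antisymmetry, where you assume $S$ and $T$ are finite. Nothing in the definition of $\mathsf{SMult}(\mathcal{G})$ guarantees this. $S$ is an arbitrary subset of $\mathsf{Paths}(\mathcal{M})$, and a path is just a sequence of consecutive edges with no ban on repetition. So $\mathsf{Paths}(\mathcal{M})$ is infinite as soon as $\mathcal{M}$ contains a directed cycle, e.g.\ an edge together with its reverse, as in the element $\mathcal{G}$ itself. For infinite sets, $|S|\le|T|\le|S|$ does not give $\iota(S)=T$.

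\textbf{The fix.} Your counting idea is rescued by stratifying by length. The maps $\iota$ and $\kappa$ preserve path length, and a finite multigraph has only finitely many paths of each length $\ell$. Let $S_\ell,T_\ell$ be the length-$\ell$ paths in $S,T$. Then $\iota(S_\ell)\subseteq T_\ell$, $|\iota(S_\ell)|=|S_\ell|$, and $|T_\ell|=|\kappa(T_\ell)|\le|S_\ell|$, with all these sets finite. Hence $\iota(S_\ell)=T_\ell$ for every $\ell$, and therefore $\iota(S)=T$. With this patch and the isomorphism convention you state, the argument is complete.
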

Next, we connect the monoids $\mathsf{SMult}(\mathcal{C})$ and $\mathsf{SMult}(\mathcal{G})$ through two order-preserving morphisms. We define the inclusion $\mathbf{In}:\mathsf{SMult}(\mathcal{G}) \to \mathsf{SMult}(\mathcal{C})$, and for the reverse direction, we define the mapping $\mathsf{R}$ as follows:
\begin{theorem}\label{mono homo R}
The mapping $\mathsf{R} : \mathsf{SMult}(\mathcal{C}) \to \mathsf{SMult}(\mathcal{G})$, defined by $\mathsf R(\mathcal{M}, S)=(\hat{\mathcal{M}}, \hat{S})$, is a monoidal homomorphism, where $\hat{\mathcal{M}}$ and $\hat{S}$ are constructed from $\mathcal M$ and $S$ as follows:
\begin{itemize}
    \item $\hat{\mathcal{M}}$ is the directed multigraph obtained by removing all directed edges in $\mathsf{DE}(\mathcal{C}) \setminus \mathsf{DE}(\mathcal{G})$.
    \item $\hat{S}$ is the set obtained by removing paths that traverse any directed edges in $\mathsf{DE}(\mathcal{C}) \setminus \mathsf{DE}(\mathcal{G})$.
\end{itemize}
\end{theorem}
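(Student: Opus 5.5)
We have to check three things: that $\mathsf R$ lands in $\mathsf{SMult}(\mathcal{G})$, that it preserves the identity, and that it respects $\bullet$, i.e. $\mathsf R((\mathcal{M},S)\bullet(\mathcal{N},T))=\mathsf R(\mathcal{M},S)\bullet\mathsf R(\mathcal{N},T)$.

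\emph{Well-definedness.} Write $\mathcal{M}=\bigoplus_i\mathcal{D}_i$ with each $\mathcal{D}_i$ a directed subgraph of $\mathcal{C}$. Deleting the edges in $\mathsf{DE}(\mathcal{C})\setminus\mathsf{DE}(\mathcal{G})$ from $\mathcal{D}_i$ leaves an acyclic directed subgraph of $\mathcal{G}$. It may be disconnected, but it is then the disjoint union of its connected components, each a directed subgraph of $\mathcal{G}$. Hence $\hat{\mathcal{M}}=\bigoplus_i\hat{\mathcal{D}}_i$ is again a disjoint union of directed subgraphs of $\mathcal{G}$, after including isolated nodes as trivial subgraphs if needed. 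A path in $S$ that avoids the removed edges uses only edges of $\hat{\mathcal{M}}$, so $\hat S\subseteq\mathsf{Paths}(\hat{\mathcal{M}})$. The identity is the empty pair (or the edgeless one), and it is clearly fixed by $\mathsf R$.

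\emph{Multiplicativity.} First, edge deletion commutes with $\bigoplus$: $\widehat{\mathcal{M}\oplus\mathcal{N}}=\hat{\mathcal{M}}\oplus\hat{\mathcal{N}}$. This holds because $\bigoplus$ takes the union of node sets and the disjoint union of edge sets, and deletion acts edgewise. For paths, we need $\widehat{S\star T}=\hat S\star\hat T$, and we argue both inclusions.
\begin{itemize}
\item[$\subseteq$:] A path in $S\star T$ lies in $S$, lies in $T$, or is a composite $pq$ with $p\in S$ and $q\in T$. If it avoids the removed edges, then so do both of its factors. Hence it lies in $\hat S$, in $\hat T$, or in the set of composites $\hat S\hat T$.
\item[$\supseteq$:] Every element of $\hat S$, of $\hat T$, or of $\hat S\hat T$ is an element of $S\star T$ avoiding the removed edges.
\end{itemize}

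\emph{Main obstacle.} The key point is that the property ``uses no removed edge'' is closed under composition and inherited by sub-factors. This makes the filtering map behave like a restriction onto a sub-monoid. I would also make explicit that the embeddings $\nu$ into $\mathsf{Paths}(\mathcal{M}\oplus\mathcal{N})$ commute with restriction, which is immediate since both are defined edgewise. The delicate part is the connectivity requirement in well-definedness; if the definition of $\mathsf{SMult}$ requires connected pieces, I would handle it by the component decomposition above.
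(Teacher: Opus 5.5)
Your proposal is correct and follows essentially the same route as the paper: edge deletion commutes with $\oplus$, and $\widehat{S\star T}=\hat S\star\hat T$ because a path in $S\star T$ avoids the removed edges exactly when it lies in $\hat S$, lies in $\hat T$, or is a composite of a path from $\hat S$ with a path from $\hat T$. You also check two things the paper leaves implicit, and both are handled correctly: that $\mathsf{R}$ lands in $\mathsf{SMult}(\mathcal{G})$ (by splitting each pruned directed subgraph into its connected components), and that $\mathsf{R}$ preserves the identity.
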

It is straightforward to verify that $\mathbf{In}$ and $\mathsf{R}$ preserve the order $\le$.
\begin{theorem}\label{galois}
    The inclusion $\mathbf{In}:\mathsf{SMult}(\mathcal{G}) \to \mathsf{SMult}(\mathcal{C})$ and the monoidal homomorphism $\mathsf{R}:\mathsf{SMult}(\mathcal{C}) \to \mathsf{SMult}(\mathcal{G})$ form a Galois connection, with $\mathbf{In}$ as the left adjoint of $\mathsf{R}$.
\end{theorem}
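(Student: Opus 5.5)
We prove the adjunction in the standard poset form: for all $(\mathcal{M},S)\in\mathsf{SMult}(\mathcal{G})$ and $(\mathcal{N},T)\in\mathsf{SMult}(\mathcal{C})$,
\[
\mathbf{In}(\mathcal{M},S)\le(\mathcal{N},T)\iff(\mathcal{M},S)\le\mathsf{R}(\mathcal{N},T).
\]
Both sides use the partial order of Theorem~\ref{partial-order}. Monotonicity of $\mathbf{In}$ and $\mathsf{R}$ was already noted, so the biconditional is enough. Equivalently, one may verify the unit and counit inequalities $\mathrm{id}\le\mathsf{R}\circ\mathbf{In}$ and $\mathbf{In}\circ\mathsf{R}\le\mathrm{id}$ and combine them with monotonicity; I will check both routes, since each is short.

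\textbf{Unit.} For $(\mathcal{M},S)\in\mathsf{SMult}(\mathcal{G})$, every directed edge of $\mathcal{M}$ lies in $\mathsf{DE}(\mathcal{G})$. Hence $\mathsf{R}$ removes no edge of $\mathbf{In}(\mathcal{M},S)$ and no path of $S$, so $\mathsf{R}\circ\mathbf{In}(\mathcal{M},S)=(\mathcal{M},S)$. In particular $(\mathcal{M},S)\le\mathsf{R}\,\mathbf{In}(\mathcal{M},S)$ via the identity embedding.

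\textbf{Counit.} For $(\mathcal{N},T)\in\mathsf{SMult}(\mathcal{C})$, the multigraph $\hat{\mathcal{N}}$ is a sub-multigraph of $\mathcal{N}$, so the inclusion $\iota:\hat{\mathcal{N}}\hookrightarrow\mathcal{N}$ is available. The set $\hat T$ consists exactly of those paths of $T$ that avoid the removed edges. Its image under $\iota$ is these same paths regarded in $\mathcal{N}$, so $\iota(\hat T)\subseteq T$. This gives $\mathbf{In}\,\mathsf{R}(\mathcal{N},T)\le(\mathcal{N},T)$.

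\textbf{Direct biconditional.}
\begin{itemize}
\item[($\Rightarrow$)] Take an embedding $\iota:\mathcal{M}\hookrightarrow\mathcal{N}$ with $\iota(S)\subseteq T$. The edges of $\mathcal{M}$ are directed edges of $\mathcal{G}$, and an embedding must preserve source, target and underlying edge of $\mathcal{G}$. So $\iota$ lands in $\hat{\mathcal{N}}$. Every path of $S$ uses only such edges, so its image is a path of $T$ avoiding $\mathsf{DE}(\mathcal{C})\setminus\mathsf{DE}(\mathcal{G})$, and therefore lies in $\hat T$. Thus $\iota$ corestricts to a witness for $(\mathcal{M},S)\le\mathsf{R}(\mathcal{N},T)$.
\item[($\Leftarrow$)] Compose the witness $\mathcal{M}\hookrightarrow\hat{\mathcal{N}}$ with the inclusion $\hat{\mathcal{N}}\hookrightarrow\mathcal{N}$, using $\hat T\subseteq T$.
\end{itemize}

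\textbf{Main obstacle.} The delicate point is the precise meaning of ``$\iota:\mathcal{M}\hookrightarrow\mathcal{N}$'' in Theorem~\ref{partial-order}. The ($\Rightarrow$) step needs $\iota$ to be an injective morphism of multigraphs over $\mathcal{C}$: it must fix nodes and send each copy of a directed edge $e$ to a copy of the same $e$. Only then can $\iota$ never send an edge of $\mathcal{G}$ to an edge outside $\mathsf{DE}(\mathcal{G})$. I will make this reading explicit, consistent with how the partial order was shown antisymmetric and compatible with $\bullet$. I will also check that removing paths in $\mathsf{R}$ is compatible with the identification $\nu$ of paths under embeddings, so that images of paths are computed edge by edge.
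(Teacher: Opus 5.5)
Your proposal is correct and follows essentially the paper's proof: the paper likewise gets ($\Rightarrow$) from $\mathsf{R}\circ\mathbf{In}=\mathrm{id}$ plus monotonicity of $\mathsf{R}$, and ($\Leftarrow$) by composing $\mathcal{M}\hookrightarrow\hat{\mathcal{N}}\hookrightarrow\mathcal{N}$ with $\hat T\subseteq T$. Your direct corestriction argument for ($\Rightarrow$) is a useful addition, because it makes explicit the node-fixing, label-preserving reading of $\iota$; the paper's unproved ``straightforward'' claim that $\mathsf{R}$ preserves $\le$ tacitly relies on that reading, and under an arbitrary multigraph embedding both that claim and the adjunction fail.
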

The most significant advantage of this Galois connection is the ability to approximate an element of $\mathsf{SMult}(\mathcal{C})$ in $\mathsf{SMult}(\mathcal{G})$ using $\mathsf{R}$. Specifically, for $(\mathcal{M},S) \in \mathsf{SMult}(\mathcal{G})$ and $(\mathcal{N},T) \in \mathsf{SMult}(\mathcal{C})$:
\begin{center}\small
    $(\mathcal{M},S) \le \mathsf{R}(\mathcal{N},T) \iff (\mathcal{M},S) = \mathbf{In}(\mathcal{M},S) \le (\mathcal{N},T)$
\end{center}
Since $\mathsf{R}$ is a monoidal homomorphism and $\mathsf{R}(\mathcal{C}) = \mathcal{G}$, we deduce that $\mathsf{R}(\mathcal{C}^m) = \mathcal{G}^m$. Thus, as stated in the following corollary, $\mathcal{G}^m$ approximates $\mathcal{C}^m$ in $\mathsf{SMult}(\mathcal{G})$.
\begin{corollary}\label{approximate element}
    $\mathcal{G}^m$ is the largest element in $\mathsf{SMult}(\mathcal{G})$ satisfying:
    \((\mathcal{M},S) \le \mathcal{G}^m \iff \mathbf{In}(\mathcal{M},S) \le \mathcal{C}^m\)
\end{corollary}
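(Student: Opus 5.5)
The plan is to derive Corollary~\ref{approximate element} directly from the Galois connection of Theorem~\ref{galois} together with the monoidal homomorphism property of $\mathsf{R}$ from Theorem~\ref{mono homo R}. The first step is to establish $\mathsf{R}(\mathcal{C}) = \mathcal{G}$. Recall that $\mathcal{C} = \bigl(\bigoplus_{e \in \mathsf{DE}(\mathcal{C})} e, \mathsf{DE}(\mathcal{C})\bigr)$. Removing the directed edges in $\mathsf{DE}(\mathcal{C}) \setminus \mathsf{DE}(\mathcal{G})$ leaves the multigraph $\bigoplus_{e \in \mathsf{DE}(\mathcal{G})} e$. The paths are single edges, so deleting those traversing removed edges leaves exactly $\mathsf{DE}(\mathcal{G})$. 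Hence $\mathsf{R}(\mathcal{C})=\mathcal{G}$. Since $\mathsf{R}$ is a monoidal homomorphism, an induction on $m$ gives
\[
\mathsf{R}(\mathcal{C}^m)=\mathsf{R}(\mathcal{C}^{m-1})\bullet\mathsf{R}(\mathcal{C})=\mathcal{G}^{m-1}\bullet\mathcal{G}=\mathcal{G}^m .
\]

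Next I would specialize the adjunction $\mathbf{In} \dashv \mathsf{R}$ to $(\mathcal{N},T)=\mathcal{C}^m$. This gives, for every $(\mathcal{M},S)\in\mathsf{SMult}(\mathcal{G})$,
\[
(\mathcal{M},S)\le \mathcal{G}^m \iff \mathbf{In}(\mathcal{M},S)\le\mathcal{C}^m ,
\]
which is precisely the displayed equivalence. So $\mathcal{G}^m$ satisfies the stated property.

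It remains to prove the maximality and uniqueness claim. Suppose $X\in\mathsf{SMult}(\mathcal{G})$ also satisfies $(\mathcal{M},S)\le X \iff \mathbf{In}(\mathcal{M},S)\le \mathcal{C}^m$ for all $(\mathcal{M},S)$. Taking $(\mathcal{M},S)=X$ gives $\mathbf{In}(X)\le\mathcal{C}^m$, and hence $X\le\mathcal{G}^m$ by the equivalence already proved. Symmetrically, taking $(\mathcal{M},S)=\mathcal{G}^m$ in $X$'s property gives $\mathcal{G}^m\le X$. Antisymmetry of the partial order (Theorem~\ref{partial-order}) then yields $X=\mathcal{G}^m$. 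For the ``largest'' reading, every element whose inclusion lies below $\mathcal{C}^m$ lies below $\mathcal{G}^m$, so $\mathcal{G}^m=\max\{(\mathcal{M},S): \mathbf{In}(\mathcal{M},S)\le\mathcal{C}^m\}$. It belongs to this set because $\mathbf{In}(\mathcal{G}^m)\le\mathcal{C}^m$, which follows from reflexivity $\mathcal{G}^m\le\mathcal{G}^m$.

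The main obstacle I expect is making $\mathsf{R}(\mathcal{C}^m)=\mathcal{G}^m$ precise. Elements are only defined up to the identification given by embeddings $\iota$, so equality should be read as isomorphism, or as equality in the poset quotient. I would also check that $\mathsf{R}(\mathcal{C})$ reproduces the canonical representative of $\mathcal{G}$, rather than one only order-equivalent to it. If strict homomorphism issues arise, I would fall back on showing $\mathsf{R}(\mathcal{C}^m)\le\mathcal{G}^m$ and $\mathcal{G}^m\le\mathsf{R}(\mathcal{C}^m)$ separately and then invoke antisymmetry.
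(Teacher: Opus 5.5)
Your proposal is correct and follows the same route as the paper: the paper uses $\mathsf{R}(\mathcal{C})=\mathcal{G}$ and the homomorphism property of $\mathsf{R}$ to get $\mathsf{R}(\mathcal{C}^m)=\mathcal{G}^m$, then reads the equivalence off the Galois connection of Theorem~\ref{galois} at $(\mathcal{N},T)=\mathcal{C}^m$. Your reflexivity-plus-antisymmetry argument for the ``largest'' (indeed unique) claim spells out what the paper leaves implicit when it calls the corollary a direct consequence of Theorem~\ref{galois}.
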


\section{Induced Distribution (ID): a simple testbed model}\label{section-id}
Graph Anomaly Detection (GAD) serves as an excellent testbed to demonstrate the efficacy of our method. The main objective of GAD is to analyze the interplay between the distribution of node attributes $X$ and the topology of the graph for an attributed graph $\mathcal{G} = (V, E, X)$. This aligns directly with the principles of our approach. Accordingly, we instantiate our framework with a minimal testbed model, termed \textit{Induced Distribution} (\textbf{ID}), for unsupervised GAD.
\subsection{Crafting the Induced Distribution (\textbf{ID}) model}
\textbf{ID} assumes a discrete uniform distribution over node attributes by default, where \( P(x_i) = \frac{1}{|V|} \) for \( x_i \in X \). Unlike \textbf{ID}, our approach-particularly in Definition \ref{induced-weights}-imposes no restriction on the distribution of node attributes and allows for an arbitrary categorical distribution, which can be used whenever available instead of the uniform one.
We present the \textbf{ID} method in algorithmic form.

\paragraph{Replacing the adjacency matrix with the \(\mathsf{POV}\) matrix}

The \textbf{ID} method replaces the adjacency matrix with the \( \mathsf{POV}(P, m, \theta) \) matrix, which encodes both topological and distributional information.
To have \( \mathsf{POV}(P, m, \theta) \), we need to compute \( \mathsf{DMI}(P, m, \theta) \) as detailed in Theorem~\ref{binomial} and \textbf{Algorithm}~\ref{algorithm_dmi}, with a time complexity of \( \mathcal{O}(m |E| |V|) \).
This matrix is computed once and reused throughout the model.

\paragraph{Graph auto-encoder} 
Using the following message-passing operator, we employ a graph auto-encoder on \( \mathsf{POV}(P, m, \theta) \) to reconstruct node attributes. For $m \geq 2$, the diagonal entries of \( \mathsf{POV}(P, m, \theta) \) are nonzero, ensuring that $v_i \in N(v_i)$. Consequently, $h^l_{v_i}$ inherits from $x_i$. $h^l_v$ can be interpreted as a local version of \( \mathsf{Mean}(v, P, m, \theta) \), (see Definition \ref{affected_mean}).
{\small\[
h^l_v = \frac{1}{|N(v)|} \sum_{u \in N(v)} h_u^{l-1}.
\]}
\textbf{Loss function \ } 
The loss function of the model is defined as:
\(
\mathsf{Loss} = \sum_{i=1}^{|V|} ||x_i - \hat{x}_i||_2,
\)
where \( \hat{x}_i \) is the reconstructed attribute for node $v_i$. Interpreting \( \hat{x}_i \) as a local mean, this loss function aims to reduce local variances. It identifies nodes whose reconstructed attributes resist alignment with their original attributes.

\textbf{Score function \ } 
The score function incorporates two measurements. The first, derived from the local perspective, considers the distance between node attributes and their associated local means. The second, based on the global perspective, evaluates the distance of local means from \( \mathsf{Mean}(\mathcal{G}, P, m, \theta) \), the global mean. A linear combination of these distances forms the score function.
\[
\mathsf{Score} := \gamma ||x_i - \hat{x}_i||_1 + \lambda ||\mathsf{Mean}(\mathcal{G}, P, m, \theta) - \hat{x}_i||_1
\]



\subsection{Experiments}
We evaluate our model, \textbf{ID}, on six real-world datasets with organic outliers, following the benchmark protocol in \cite{bond}. All datasets are available in the PyGOD library \cite{pygod}. These include two small graphs (\textbf{Disney}, \textbf{Books}), three medium-scale graphs (\textbf{Weibo}, \textbf{Reddit}, \textbf{Enron}), and one large-scale graph (\textbf{DGraph}). 
Table \ref{stat} (Appendix) summarizes their statistics.  This diverse collection enables us to assess both the ability of \textbf{ID} to capture inlier distributions in low-data regimes (Disney and Books) and its scalability to large-scale data (DGraph). The selected hyperparameters for each dataset are reported in Table \ref{hyperparameters} (Appendix).
We compare \textbf{ID} against a comprehensive set of baselines, including LOF \citep{lof}, IF \citep{if}, SCAN \citep{scan}, Radar \citep{radar}, ANOMALOUS \citep{anomalous}, MLPAE \citep{mlpae}, GCNAE \citep{gcnae}, DOMINANT \citep{dominant}, DONE \citep{done}, AdONE \citep{adone}, AnomalyDAE\citep{anomalydae}, GAAN \citep{gaan}, CONAD \citep{conad}, \textit{GAD-NR} \citep{gad-nr}, and \textit{CoCo} \citep{coco}.


\paragraph{Results}
We report performance in terms of ROC-AUC, averaged over different random seeds. Results are summarized in Table~\ref{results}. 
\textbf{ID} achieves state-of-the-art performance on five datasets: \textbf{Reddit}, \textbf{Disney}, \textbf{Books}, \textbf{Enron}, and \textbf{DGraph}, while remaining competitive on \textbf{Weibo}. On small datasets (\textbf{Disney} and \textbf{Books}), a sufficiently large $m$ allows \textbf{ID} to capture global attribute distributions, leading to strong performance. For medium and large datasets (\textbf{Reddit}, \textbf{Enron}, and \textbf{DGraph}), smaller values of $m$ preserve sensitivity to local structures, enabling robust anomaly detection.  
As noted in \cite{bond}, anomalies in \textbf{Weibo} exhibit both structural and contextual characteristics. This suggests that the discrete uniform distribution assumed in our model may not perfectly match the attribute distribution in \textbf{Weibo}, partially explaining the relatively lower performance.

\textbf{Scalability of \textbf{ID} \ }
The scalability of \textbf{ID} arises from replacing the adjacency matrix with the \( \mathsf{POV} \) matrix, which incorporates topological context and serves as an implicit feature extractor. For example, in \textbf{DGraph}, \( \mathsf{POV}(P, 4, \theta) \) produces 11,404,815 nonzero entries, compared to 4,300,999 in the adjacency matrix, thereby enriching structural representation while remaining computationally efficient. Moreover, since anomalies in large graphs typically stem from local patterns, a relatively small value of \( m \) suffices to capture meaningful structure. We set \( m=4 \) for \textbf{DGraph}, which reduces the cost of computing the \( \mathsf{POV} \) matrix (10.22 seconds on CPU; see Table~\ref{tab:runtime}, Appendix). Combined with the simplicity of the message passing operator, loss, and score functions, this design supports both efficiency and scalability.

\begin{table}[t]
\centering
\caption{ROC-AUC (\%) of GAD methods on six datasets, reported as mean ± std over different random seeds. Best results are in \textbf{bold}.}
\label{results}
\scriptsize      
\setlength{\tabcolsep}{2pt}  
\renewcommand{\arraystretch}{1.05}
\begin{tabular}{lcccccc}
\toprule
Dataset & Weibo & Reddit & Disney & Books & Enron & DGraph \\
\midrule
LOF & 56.5±0.0 & 57.2±0.0 & 47.9±0.0 & 36.5±0.0 & 46.4±0.0 & TLE \\
IF & 53.5±2.8 & 45.2±1.7 & 57.6±2.9 & 43.0±1.8 & 40.1±1.4 & 60.9±0.7 \\
MLPAE & 82.1±3.6 & 50.6±0.0 & 49.2±5.7 & 42.5±5.6 & 73.1±0.0 & 37.0±1.9 \\
SCAN & 63.7±5.6 & 49.9±0.3 & 50.5±4.0 & 49.8±1.7 & 52.8±3.4 & TLE \\
Radar & \textbf{98.9±0.1} & 54.9±1.2 & 51.8±0.0 & 52.8±0.0 & 80.8±0.0 & OOM\_C \\
ANOMALOUS & \textbf{98.9±0.1} & 54.9±5.6 & 51.8±0.0 & 52.8±0.0 & 80.8±0.0 & OOM\_C \\
GCNAE & 90.8±1.2 & 50.6±0.0 & 42.2±7.9 & 50.0±4.5 & 66.6±7.8 & 40.9±0.5 \\
DOMINANT & 85.0±14.6 & 56.0±0.2 & 47.1±4.5 & 50.1±5.0 & 73.1±8.9 & OOM\_C \\
DONE & 85.3±4.1 & 53.9±2.9 & 41.7±6.2 & 43.2±4.0 & 46.7±6.1 & OOM\_C \\
AdONE & 84.6±2.2 & 50.4±4.5 & 48.8±5.1 & 53.6±2.0 & 44.5±2.9 & OOM\_C \\
AnomalyDAE & 91.5±1.2 & 55.7±0.4 & 48.8±2.2 & 62.2±8.1 & 54.3±11.2 & OOM\_C \\
GAAN & 92.5±0.0 & 55.4±0.4 & 48.0±0.0 & 54.9±5.0 & 73.1±0.0 & OOM\_C \\
CONAD & 85.4±14.3 & 56.1±0.1 & 48.0±3.5 & 52.2±6.9 & 71.9±4.9 & 34.7±1.2 \\
GAD-NR & 87.7±5.4 & 58.0±1.7 & 76.8±2.8 & 65.7±5.0 & 80.9±3.0 & OOM\_C \\
CoCo & 97.0±0.1 & 61.9±0.5 & 84.3±1.1 & 71.6±0.9 & 85.4±1.8 & OOM\_C \\
\midrule
\textbf{Our Model} & 91.2±1.3 & \textbf{62.0±1.7} & \textbf{90.6±2.8} & \textbf{76.8±1.8} & \textbf{87.4±3.6} & \textbf{69.7±1.4} \\
\bottomrule
\end{tabular}
\end{table}

\textbf{Hyperparameter Analysis \ }  
We analyze the hyperparameters $(m, \gamma, \lambda)$, while $\theta$ follows the previous analysis.

~\textbf{Hyperparameters $\gamma$ and $\lambda$:} In small graphs with sufficiently large $m$, nodes acquire broader perspectives on the attribute distribution, making comparisons to the global mean more meaningful. Thus, $\gamma < \lambda$ yields better results. Conversely, in large graphs, nodes receive localized views, making $\gamma > \lambda$ more effective.  
We validate this behavior using three datasets: Disney (small), Books (medium), and DGraph (large). Figure \ref{fig:gamma_lambda_trend} visualizes the sensitivity of AUC to the trade-off between global \(\lambda\) and local \(\gamma\) aggregation across datasets of different scales (Disney, Books, DGraph). Full numeric values are provided in Table \ref{tab:gamma_lambda_all} (Appendix).

\begin{figure}[t]
    \centering
    \includegraphics[width=0.76\linewidth]{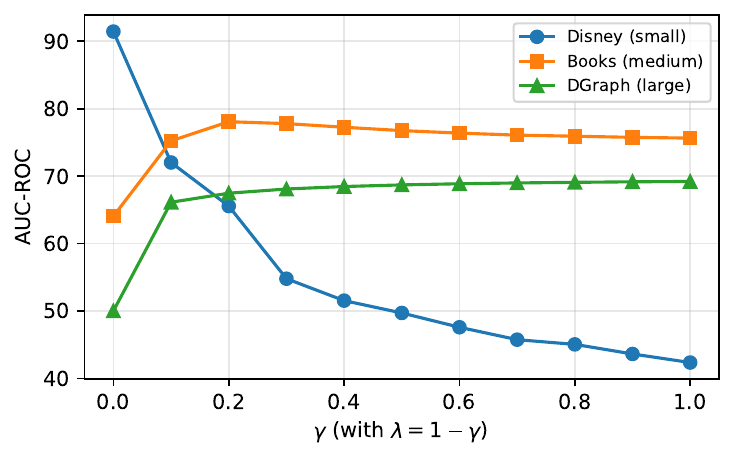}
    \caption{
        Sensitivity of AUC to $\gamma$ (with $\lambda = 1-\gamma$)
        for different graph scales 
    }
    \label{fig:gamma_lambda_trend}
\end{figure}

\textbf{Hyperparameter $m$:}
The hyperparameter $m$ controls the depth of each node’s point of view through
$\mathsf{Cov}(m)$, which aggregates information along longer paths. As shown in Figure~\ref{fig:m_ablation}, larger $m$ values improve both ROC-AUC and Average Precision (AP) on Disney and increase robustness to noisy attributes. This indicates that $m$ effectively balances local and global information in the graph. Full numeric values are provided in Table \ref{tab:m_ablation} (Appendix).
\begin{figure}[t]
    \centering
    \includegraphics[width=0.80\linewidth]{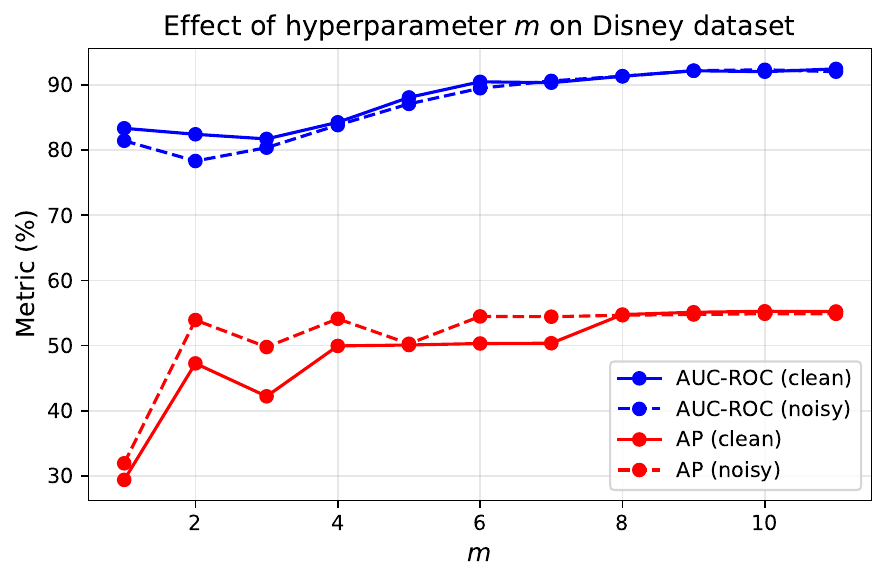}
    \caption{Increasing $m$ improves AUC-ROC, AP, and robustness to Gaussian noise ($\mu=0$, $\sigma^2=1$) on Disney
    }
    \label{fig:m_ablation}
\end{figure}
\section{Conclusion}
We introduced an algebraic--categorical framework for constructing topology-conditioned distributions that quantify how graph structure influences node attributes. By formalizing the \emph{point of view} of nodes, we obtained principled approximations of $P(\cdot \mid v)$ and $P(\cdot \mid \mathcal{G})$ together with a structural sufficiency guarantee. We instantiated this framework with a minimal testbed model, \textbf{ID}, and evaluated it via unsupervised GAD. Despite its simplicity, ID achieves strong empirical performance, highlighting the expressive power of topology-conditioned distributions and their potential for broader graph learning applications.

\bibliography{example_paper}

@book{highertopostheory,
title={Higher Topos Theory},
author={Lurie, Jacob},
volume={1},
year={2009},
publisher={Princeton University Press}
}

@book{commutativealgebra,
title={Commutative algebra: with a view toward algebraic geometry},
author={Eisenbud, David},
volume={1},
year={1995},
publisher={Springer-Verlag}
}

@book{abstractandconcretecategories,
title={Abstract and Concrete Categories – The Joy of Cats},
author={Adámek, Jiří and Herrlich, Horst and Strecker, George},
volume={1},
year={1990},
publisher={John Wiley and Sons}
}

@misc{GGNN,
      title={Grothendieck Graph Neural Networks Framework: An Algebraic Platform for Crafting Topology-Aware GNNs}, 
      author={Amirreza Shiralinasab Langari and Leila Yeganeh and Kim Khoa Nguyen},
      year={2024},
      eprint={2412.08835},
      archivePrefix={arXiv},
      primaryClass={cs.LG},
      url={https://arxiv.org/abs/2412.08835}, 
}

@book{maclane,
    author = {Saunders Mac{ }Lane},
    title = {Categories for the Working Mathematician},
    publisher = {Springer New York, NY},
    year = {1998}
}

@InProceedings{catdeeplearn1,
  title = 	 {Position: Categorical Deep Learning is an Algebraic Theory of All Architectures},
  author =       {Gavranovi\'{c}, Bruno and Lessard, Paul and Dudzik, Andrew Joseph and Von Glehn, Tamara and Madeira Ara\'{u}jo, Jo\~{a}o Guilherme and Veli\v{c}kovi\'{c}, Petar},
  booktitle = 	 {Proceedings of the 41st International Conference on Machine Learning},
  pages = 	 {15209--15241},
  year = 	 {2024},
  
  volume = 	 {235},
  series = 	 {Proceedings of Machine Learning Research},
  month = 	 {21--27 Jul},
  publisher =    {PMLR},
  
  url = 	 {https://proceedings.mlr.press/v235/gavranovic24a.html}
}

@misc{catdeeplearn2,
      title={Fundamental Components of Deep Learning: A category-theoretic approach}, 
      author={Bruno Gavranović},
      year={2024},
      eprint={2403.13001},
      archivePrefix={arXiv},
      primaryClass={cs.LG},
      url={https://arxiv.org/abs/2403.13001}, 
}

@misc{catfoundgrad,
      title={Categorical Foundations of Gradient-Based Learning}, 
      author={G. S. H. Cruttwell and Bruno Gavranović and Neil Ghani and Paul Wilson and Fabio Zanasi},
      year={2021},
      eprint={2103.01931},
      archivePrefix={arXiv},
      primaryClass={cs.LG},
      url={https://arxiv.org/abs/2103.01931}, 
}

@misc{catatten,
      title={On the Anatomy of Attention}, 
      author={Nikhil Khatri and Tuomas Laakkonen and Jonathon Liu and Vincent Wang-Maścianica},
      year={2024},
      eprint={2407.02423},
      archivePrefix={arXiv},
      primaryClass={cs.LG},
      url={https://arxiv.org/abs/2407.02423}, 
}

@misc{lenseslearners,
      title={Lenses and Learners}, 
      author={Brendan Fong and Michael Johnson},
      year={2019},
      eprint={1903.03671},
      archivePrefix={arXiv},
      primaryClass={cs.LG},
      url={https://arxiv.org/abs/1903.03671}, 
}

@inbook{dominant,
author = {Kaize Ding and Jundong Li and Rohit Bhanushali and Huan Liu},
title = {Deep Anomaly Detection on Attributed Networks},
booktitle = {Proceedings of the 2019 SIAM International Conference on Data Mining (SDM)},
chapter = {},
pages = {594-602},
doi = {10.1137/1.9781611975673.67},
URL = {https://epubs.siam.org/doi/abs/10.1137/1.9781611975673.67},
eprint = {https://epubs.siam.org/doi/pdf/10.1137/1.9781611975673.67}
}

@inproceedings{pickchoose,
author = {Liu, Yang and Ao, Xiang and Qin, Zidi and Chi, Jianfeng and Feng, Jinghua and Yang, Hao and He, Qing},
title = {Pick and Choose: A GNN-based Imbalanced Learning Approach for Fraud Detection},
year = {2021},
isbn = {9781450383127},
publisher = {Association for Computing Machinery},
address = {New York, NY, USA},
url = {https://doi.org/10.1145/3442381.3449989},
doi = {10.1145/3442381.3449989},
booktitle = {Proceedings of the Web Conference 2021},
pages = {3168–3177},
numpages = {10},
location = {Ljubljana, Slovenia},
series = {WWW '21}
}

@inproceedings{gad-nr,
author = {Roy, Amit and Shu, Juan and Li, Jia and Yang, Carl and Elshocht, Olivier and Smeets, Jeroen and Li, Pan},
title = {GAD-NR: Graph Anomaly Detection via Neighborhood Reconstruction},
year = {2024},
isbn = {9798400703713},
publisher = {Association for Computing Machinery},
address = {New York, NY, USA},
url = {https://doi.org/10.1145/3616855.3635767},
doi = {10.1145/3616855.3635767},
booktitle = {Proceedings of the 17th ACM International Conference on Web Search and Data Mining},
pages = {576–585},
numpages = {10},
location = {Merida, Mexico},
series = {WSDM '24}
}

@inproceedings{trunc,
 author = {Qiao, Hezhe and Pang, Guansong},
 booktitle = {Advances in Neural Information Processing Systems},
 editor = {A. Oh and T. Naumann and A. Globerson and K. Saenko and M. Hardt and S. Levine},
 pages = {49490--49512},
 publisher = {Curran Associates, Inc.},
 title = {Truncated Affinity Maximization: One-class Homophily Modeling for Graph Anomaly Detection},
 volume = {36},
 year = {2023}
}

@inproceedings{comga,
author = {Luo, Xuexiong and Wu, Jia and Beheshti, Amin and Yang, Jian and Zhang, Xiankun and Wang, Yuan and Xue, Shan},
title = {ComGA: Community-Aware Attributed Graph Anomaly Detection},
year = {2022},
isbn = {9781450391320},
publisher = {Association for Computing Machinery},
address = {New York, NY, USA},
url = {https://doi.org/10.1145/3488560.3498389},
doi = {10.1145/3488560.3498389},
booktitle = {Proceedings of the Fifteenth ACM International Conference on Web Search and Data Mining},
pages = {657–665},
numpages = {9},
location = {Virtual Event, AZ, USA},
series = {WSDM '22}
}

@INPROCEEDINGS{anomalydae,
  author={Fan, Haoyi and Zhang, Fengbin and Li, Zuoyong},
  booktitle={ICASSP 2020 - 2020 IEEE International Conference on Acoustics, Speech and Signal Processing (ICASSP)}, 
  title={Anomalydae: Dual Autoencoder for Anomaly Detection on Attributed Networks}, 
  year={2020},
  volume={},
  number={},
  pages={5685-5689},
  doi={10.1109/ICASSP40776.2020.9053387}}

@book{hun,
    author = {Thomas W. Hungerford},
    title = {Algebra},
    publisher = {Springer New York, NY},
    year = {2011}
}

@inproceedings{bond,
 author = {Liu, Kay and Dou, Yingtong and Zhao, Yue and Ding, Xueying and Hu, Xiyang and Zhang, Ruitong and Ding, Kaize and Chen, Canyu and Peng, Hao and Shu, Kai and Sun, Lichao and Li, Jundong and Chen, George H and Jia, Zhihao and Yu, Philip S},
 booktitle = {Advances in Neural Information Processing Systems},
 editor = {S. Koyejo and S. Mohamed and A. Agarwal and D. Belgrave and K. Cho and A. Oh},
 pages = {27021--27035},
 publisher = {Curran Associates, Inc.},
 title = {BOND: Benchmarking Unsupervised Outlier Node Detection on Static Attributed Graphs},
 volume = {35},
 year = {2022}
}

@ARTICLE{coco,
  author={Wang, Ruidong and Xi, Liang and Zhang, Fengbin and Fan, Haoyi and Yu, Xu and Liu, Lei and Yu, Shui and Leung, Victor C. M.},
  journal={IEEE Transactions on Knowledge and Data Engineering}, 
  title={Context Correlation Discrepancy Analysis for Graph Anomaly Detection}, 
  year={2025},
  volume={37},
  number={1},
  pages={174-187},
  doi={10.1109/TKDE.2024.3488375}}

@article{pygod,
  author  = {Kay Liu and Yingtong Dou and Xueying Ding and Xiyang Hu and Ruitong Zhang and Hao Peng and Lichao Sun and Philip S. Yu},
  title   = {{PyGOD}: A {Python} Library for Graph Outlier Detection},
  journal = {Journal of Machine Learning Research},
  year    = {2024},
  volume  = {25},
  number  = {141},
  pages   = {1--9},
  url     = {http://jmlr.org/papers/v25/23-0963.html}
}

@article{unquan,
title = {A review of uncertainty quantification in deep learning: Techniques, applications and challenges},
journal = {Information Fusion},
volume = {76},
pages = {243-297},
year = {2021},
issn = {1566-2535},
doi = {https://doi.org/10.1016/j.inffus.2021.05.008},
url = {https://www.sciencedirect.com/science/article/pii/S1566253521001081},
author = {Moloud Abdar and Farhad Pourpanah and Sadiq Hussain and Dana Rezazadegan and Li Liu and Mohammad Ghavamzadeh and Paul Fieguth and Xiaochun Cao and Abbas Khosravi and U. Rajendra Acharya and Vladimir Makarenkov and Saeid Nahavandi},
}

@inproceedings{graphpost,
 author = {Stadler, Maximilian and Charpentier, Bertrand and Geisler, Simon and Z\"{u}gner, Daniel and G\"{u}nnemann, Stephan},
 booktitle = {Advances in Neural Information Processing Systems},
 editor = {M. Ranzato and A. Beygelzimer and Y. Dauphin and P.S. Liang and J. Wortman Vaughan},
 pages = {18033--18048},
 publisher = {Curran Associates, Inc.},
 title = {Graph Posterior Network: Bayesian Predictive Uncertainty for Node Classification},
 url = {https://proceedings.neurips.cc/paper_files/paper/2021/file/95b431e51fc53692913da5263c214162-Paper.pdf},
 volume = {34},
 year = {2021}
}

@InProceedings{confpre,
  title = 	 {Conformal Prediction Sets for Graph Neural Networks},
  author =       {H. Zargarbashi, Soroush and Antonelli, Simone and Bojchevski, Aleksandar},
  booktitle = 	 {Proceedings of the 40th International Conference on Machine Learning},
  pages = 	 {12292--12318},
  year = 	 {2023},
  editor = 	 {Krause, Andreas and Brunskill, Emma and Cho, Kyunghyun and Engelhardt, Barbara and Sabato, Sivan and Scarlett, Jonathan},
  volume = 	 {202},
  series = 	 {Proceedings of Machine Learning Research},
  month = 	 {23--29 Jul},
  publisher =    {PMLR},
  url = 	 {https://proceedings.mlr.press/v202/h-zargarbashi23a.html},
}

@article{gaan,
  title={Graph Attention Auto-Encoder for Anomaly Detection in Attributed Networks},
  author={Chen, Bowen and et al.},
  journal={IEEE TNNLS},
  year={2021}
}

@article{done,
  title={DONE: Dynamic outlier detection in attributed networks},
  author={Peng, Hao and et al.},
  journal={IEEE TKDE},
  year={2021}
}

@article{adone,
  title={AdONE: Anomaly detection in attributed networks via contrastive self-supervised learning},
  author={Liu, Cheng and et al.},
  journal={Neurocomputing},
  year={2022}
}

@inproceedings{radar,
  title={Radar: Residual analysis for anomaly detection in attributed networks},
  author={Li, Jiawei and et al.},
  booktitle={IJCAI},
  year={2020}
}

@inproceedings{lof,
author = {Breunig, Markus M. and Kriegel, Hans-Peter and Ng, Raymond T. and Sander, J\"{o}rg},
title = {LOF: identifying density-based local outliers},
year = {2000},
isbn = {1581132174},
publisher = {Association for Computing Machinery},
address = {New York, NY, USA},
url = {https://doi.org/10.1145/342009.335388},
doi = {10.1145/342009.335388},
booktitle = {Proceedings of the 2000 ACM SIGMOD International Conference on Management of Data},
pages = {93–104},
numpages = {12},
location = {Dallas, Texas, USA},
series = {SIGMOD '00}
}

@article{if,
author = {Liu, Fei Tony and Ting, Kai Ming and Zhou, Zhi-Hua},
title = {Isolation-Based Anomaly Detection},
year = {2012},
issue_date = {March 2012},
publisher = {Association for Computing Machinery},
address = {New York, NY, USA},
volume = {6},
number = {1},
issn = {1556-4681},
url = {https://doi.org/10.1145/2133360.2133363},
doi = {10.1145/2133360.2133363},
journal = {ACM Trans. Knowl. Discov. Data},
month = mar,
articleno = {3},
numpages = {39}
}

@inproceedings{mlpae,
author = {Sakurada, Mayu and Yairi, Takehisa},
title = {Anomaly Detection Using Autoencoders with Nonlinear Dimensionality Reduction},
year = {2014},
isbn = {9781450331593},
publisher = {Association for Computing Machinery},
address = {New York, NY, USA},
url = {https://doi.org/10.1145/2689746.2689747},
doi = {10.1145/2689746.2689747},
booktitle = {Proceedings of the MLSDA 2014 2nd Workshop on Machine Learning for Sensory Data Analysis},
pages = {4–11},
numpages = {8},
location = {Gold Coast, Australia QLD, Australia},
series = {MLSDA'14}
}

@inproceedings{scan,
author = {Xu, Xiaowei and Yuruk, Nurcan and Feng, Zhidan and Schweiger, Thomas A. J.},
title = {SCAN: a structural clustering algorithm for networks},
year = {2007},
isbn = {9781595936097},
publisher = {Association for Computing Machinery},
address = {New York, NY, USA},
url = {https://doi.org/10.1145/1281192.1281280},
doi = {10.1145/1281192.1281280},
booktitle = {Proceedings of the 13th ACM SIGKDD International Conference on Knowledge Discovery and Data Mining},
pages = {824–833},
numpages = {10},
location = {San Jose, California, USA},
series = {KDD '07}
}

@inproceedings{anomalous,
author = {Peng, Zhen and Luo, Minnan and Li, Jundong and Liu, Huan and Zheng, Qinghua},
title = {ANOMALOUS: a joint modeling approach for anomaly detection on attributed networks},
year = {2018},
isbn = {9780999241127},
publisher = {AAAI Press},
booktitle = {Proceedings of the 27th International Joint Conference on Artificial Intelligence},
pages = {3513–3519},
numpages = {7},
location = {Stockholm, Sweden},
series = {IJCAI'18}
}

@misc{gcnae,
      title={Variational Graph Auto-Encoders}, 
      author={Thomas N. Kipf and Max Welling},
      year={2016},
      eprint={1611.07308},
      archivePrefix={arXiv},
      primaryClass={stat.ML},
      url={https://arxiv.org/abs/1611.07308}, 
}

@InProceedings{conad,
author="Xu, Zhiming
and Huang, Xiao
and Zhao, Yue
and Dong, Yushun
and Li, Jundong",
editor="Gama, Jo{\~a}o
and Li, Tianrui
and Yu, Yang
and Chen, Enhong
and Zheng, Yu
and Teng, Fei",
title="Contrastive Attributed Network Anomaly Detection with Data Augmentation",
booktitle="Advances in Knowledge Discovery and Data Mining",
year="2022",
publisher="Springer International Publishing",
address="Cham",
pages="444--457",
isbn="978-3-031-05936-0"
}
\bibliographystyle{icml2026}

\newpage
\appendix
\onecolumn
\section{Related Work}

\paragraph{Categorical deep learning} Category theory offers a unifying framework for the abstract structure of learning systems, and recent research has increasingly leveraged it to enhance interpretability and modularity in deep learning. In \cite{catdeeplearn1, catdeeplearn2}, category theory is employed to harmonize disparate deep learning architectures. \cite{catfoundgrad} introduces a categorical semantics for gradient-based learning, offering a uniform view of optimization algorithms. In \cite{catatten}, a categorical framework is developed to formalize attention mechanisms, while \cite{lenseslearners} demonstrates a connection between supervised learning and symmetric lenses, thus recasting learning as a bidirectional transformation process. Our approach builds on this tradition by integrating under categories and monoidal structures to model node viewpoints within attributed graphs, leading to probabilistic interpretations of structural context.

\paragraph{Graph anomaly detection} 
In unsupervised Graph Anomaly Detection (GAD), identifying nodes that deviate from expected graph patterns involves both structural and attribute-based signals. Graph Auto-Encoders (GAEs) form a foundational line of work in this domain. \cite{dominant} uses a GAE to reconstruct node attributes and topology, flagging high-reconstruction-error nodes as anomalies. Enhancements to GAE, such as attention-guided learning \cite{anomalydae}, selective substructure focus \cite{pickchoose}, and dual reconstruction objectives \cite{gad-nr}, have improved performance. 
Several models aim to better capture the interaction between topology and node attributes. \cite{trunc} introduces a topology-sensitive anomaly score, while \cite{comga} proposes community-level reconstruction rather than neighborhood-based models. Contrastive approaches like CoCo \cite{coco} integrate structural communities for self-supervised GAD, while GAAN \cite{gaan} applies attention over graph neighborhoods to better represent context. DONE \cite{done} and AdONE \cite{adone} use generative adversarial frameworks to learn normality distributions, and Radar \cite{radar} leverages residual analysis between structure and attributes. However, these methods typically treat structure and attributes as parallel inputs, without modeling how topology modifies attribute distributions.

Our proposed Induced Distribution (ID) model takes a distinct perspective: we use categorical and probabilistic tools to explicitly model how graph topology influences the node attribute distribution. By defining posterior distributions $P(\cdot \mid v)$ and $P(\cdot \mid \mathcal{G})$ through monoidal representations and under categories, our framework directly captures the topological modulation of attributes, a direction unexplored by existing GAD models.

\section{Fundamental Concepts in Category Theory}\label{cat}
We provide the fundamental categorical concepts that are used throughout the paper.
\begin{definition}\cite{hun}
    A monoid is a non-empty set $\mathsf{M}$ together with a binary operation $\cdot$ on $\mathsf{M}$ which
    \begin{itemize}
        \item[1)] is associative: $a\cdot(b\cdot c)=(a\cdot b)\cdot c$ for all $a,b,c\in \mathsf{M}$ and

   \item[2)] contains identity element $e\in\mathsf{M}$ such that $a\cdot e=e\cdot a=a$
    \end{itemize}
   If, for all $a, b \in \mathsf{M}$, the operation satisfies $a \cdot b = b \cdot a$, then we say that $\mathsf{M}$ is a commutative monoid.
\end{definition}
\begin{definition}\cite{abstractandconcretecategories}
    A category is a quadruple $\mathcal A = (\mathcal O,\text{hom},id,\cdot)$ consisting of
    \begin{itemize}
        \item a class $\mathcal O$, whose members are called objects,
        \item for each pair $(A,B)$ of objects, a set $\text{hom}(A,B)$, whose members are called morphisms from $A$ to $B$-[the statement “$f\in \text{hom}(A,B)$” is expressed
 more graphically by using arrows; e.g., by statements such as “$f:A\rightarrow B$ is a
 morphism” or “$\xymatrix{A\ar[r]^{f}&B}$ is a morphism”]
 \item for each object $A$, a morphism $id_A:A\rightarrow A$, called the identity on $A$,
 \item a composition law associating with each morphism $\xymatrix{A\ar[r]^f&B}$ and each morphism $\xymatrix{B\ar[r]^g&C}$ an morphism $\xymatrix{A\ar[r]^{g\cdot f}&B}$, called the composite of $f$ and $g$, subject to the following conditions:
 \begin{itemize}
     \item composition is associative; i.e., for morphisms $\xymatrix{A\ar[r]^f&B}$, $\xymatrix{B\ar[r]^g&C}$ and $\xymatrix{C\ar[r]^h&D}$
     the
 equation $h\cdot (g \cdot f) = (h\cdot g)\cdot f$ holds,
 \item identities act as identities with respect to composition; i.e., for morphisms $\xymatrix{A\ar[r]^f&B}$, we have $id_B\cdot f=f$ and $f\cdot id_A=f$,
 \item the sets $\text{hom}(A,B)$ are pairwise disjoint.
 \end{itemize}
    \end{itemize}
\end{definition}

\begin{definition} \cite{highertopostheory}
    For a category $\mathcal C$ and an object $C\in\mathcal C$, the under category $C/\mathcal C$ is the category that has morphisms with domain $C$, $C\rightarrow D$, as objects and the commutative triangles
    \[\xymatrix{&C\ar[ld]_{f}\ar[rd]^{g}&\\
    D\ar[rr]^{h}&&D'}\]
    as morphisms.
\end{definition}

\begin{definition}\cite{abstractandconcretecategories}
    For categories $\mathcal A$ and $\mathcal B$, a functor $F$ from $\mathcal A$ to $\mathcal B$ is a function that assigns
 to each $\mathcal A$-object $A$ a $\mathcal B$-object $F(A)$, and to each $\mathcal A$-morphism $\xymatrix{A\ar[r]^f&B}$ a $\mathcal B$-morphism $\xymatrix{F(A)\ar[r]^{F(f)}&F(B)}$, in such a way that
 \begin{itemize}
     \item $F$ preserves composition; i.e., $F(f\cdot g)$ = $F(f)\cdot F(g)$ whenever $f\cdot g$ is defined, and
     \item $F$ preserves identity morphisms; i.e., $F(id_A) = id_{F(A)}$ for each $\mathcal A$-object $A$.
 \end{itemize}
\end{definition}

\section{Algorithm}\label{algorithm}

\begin{algorithm}[H]
\caption{Computing $\mathsf{DMI}(P,m,\theta)$}
\label{algorithm_dmi}
\begin{algorithmic}[1]
\STATE \textbf{Input:} Adjacency matrix $\mA$, probability distribution $P$ of node attributes, integer $m$, parameter $\theta$
\STATE \textbf{Output:} $\mathsf{DMI}(P,m,\theta)$
\STATE \textbf{Initialization:} $\mW \gets \mathbf{0}$

\FOR{all $(i,j)$ such that $\mA_{i,j}\neq 0$}
    \STATE $\mW_{i,j}\gets \dfrac{P_j}{1-P_j}\cdot \prod_{r = 1}^{|V|} (1 - P_r)^\theta$
\ENDFOR

\STATE $\mathsf{DMI}(P,m,\theta)\gets(\mW+\mI)$
\FOR{$i = 2, \ldots, m$}
    \STATE $\mathsf{DMI}(P,m,\theta)\gets \mathsf{DMI}(P,m,\theta) + \mathsf{DMI}(P,m,\theta)\cdot\mW$
\ENDFOR
\STATE $\mathsf{DMI}(P,m,\theta)\gets \mathsf{DMI}(P,m,\theta)-\mI$
\STATE \textbf{Return:} $\mathsf{DMI}(P,m,\theta)$
\end{algorithmic}
\end{algorithm}
\subsection{Time Complexity}

We report the dominant time complexity of several graph anomaly detection models, based on the descriptions in their original papers. Here, $|V|$ denotes the number of nodes, $|E|$ the number of edges, $d$ the feature dimension, and $m$ and $H$ are model-specific hyperparameters.

\begin{itemize}
    \item \textbf{Induced Distribution (ID):} $\mathcal{O}(m\,|E|\,|V|)$
    
    \item \textbf{Radar \cite{radar}:} $\mathcal{O}(d|V|^2) + \mathcal{O}(|V|^3)$
    
    \item \textbf{ANOMALOUS \cite{anomalous}:} $\mathcal{O}(d|V|^2)$
    
    \item \textbf{DOMINANT \cite{dominant}:} $\mathcal{O}(H d |E| + |V|^2)$
    
    \item \textbf{AnomalyDAE \cite{anomalydae}:} $\mathcal{O}(|V|^2 + d|E|)$
\end{itemize}

These expressions capture the dominant asymptotic terms and omit constant factors and lower-order contributions.

\section{Experiment details}\label{experiment}
\paragraph{Datasets and Hyperparameters}  
Table~\ref{stat} summarizes the statistics of the datasets on which we applied our model, including the number of nodes, edges, features, average degree, number of outliers, and outlier ratio. Table~\ref{hyperparameters} reports the hyperparameters selected for each dataset in our experiments, including learning rate, dropout, model-specific parameters, scheduler settings, and the values of $\gamma$ and $\lambda$.
All experiments were conducted on Google Colab using CPU and standard RAM.
The runtime of computing the \(\mathsf{POV}\) matrix for each dataset is reported in Table~\ref{tab:runtime}.

\paragraph{Detailed Hyperparameter Results.}
Tables~\ref{tab:gamma_lambda_all} and~\ref{tab:m_ablation} report the complete numeric results corresponding to Figures~\ref{fig:gamma_lambda_trend} and~\ref{fig:m_ablation}, respectively. Each table lists the exact AUC-ROC and Average Precision values for all tested settings of \(\gamma\), \(\lambda\) and \(m\) across the Disney, Books, and DGraph datasets. These values provide quantitative support for the sensitivity trends and robustness analyses discussed in the main text.
\begin{table}[!h]
\centering
\caption{
Statistics of benchmark datasets used in experiments.
Degree denotes the average node degree; Ratio indicates the percentage of outliers.
}
\label{stat}
\setlength{\tabcolsep}{6pt}  
\begin{tabular}{lcccccc}
\toprule
Dataset & \#Nodes & \#Edges & \#Feat. & Deg. & \#Out. & Ratio \\
\midrule
Weibo  & 8{,}405      & 407{,}963    & 400 & 48.5 & 868     & 10.3\% \\
Reddit & 10{,}984     & 168{,}016    & 64  & 15.3 & 366     & 3.3\%  \\
Disney & 124          & 335          & 28  & 2.7  & 6       & 4.8\%  \\
Books  & 1{,}418      & 3{,}695      & 21  & 2.6  & 28      & 2.0\%  \\
Enron  & 13{,}533     & 176{,}987    & 18  & 13.1 & 5       & 0.4\%  \\
DGraph & 3.7M         & 4.3M         & 17  & 1.2  & 15{,}509 & 0.4\%  \\
\bottomrule
\end{tabular}
\end{table}

\begin{table}[!h]
\centering
\caption{
Hyperparameters used for each dataset
}
\label{hyperparameters}
\setlength{\tabcolsep}{4pt}
\begin{tabular}{lcccccc}
\toprule
 & Weibo & Reddit & Disney & Books & Enron & DGraph \\
\midrule
LR & 2e$^{-2}$ & 9e$^{-2}$ & 1e$^{-4}$ & 1e$^{-2}$ & 5e$^{-4}$ & 3e$^{-2}$ \\
Dropout & – & 0.7 & 0.9 & 0.4 & 0.5 & 0.0 \\
($m$, $\theta$) & (3, 1) & (3, 1) & (11, 1) & (5, 1) & (2, 1) & (4, 0) \\
Sched. (step, $\gamma$) & – & – & – & – & (4, 0.9) & – \\
($\gamma$, $\lambda$) & (1, 0) & (1, 0) & (0, 1) & (0.2, 0.8) & (0, 1) & (1, 0) \\
Hidden ch. & 32 & 18 & 48 & 30 & 8 & 17 \\
\bottomrule
\end{tabular}
\end{table}


\begin{table}[!h]
    \centering
    \caption{Runtime (in seconds) for computing \(\mathsf{POV}\) matrix on Google Colab using CPU and standard RAM}
    \begin{tabular}{lcccccc}
        \toprule
        \textbf{Dataset} & Weibo & Reddit & Disney & Books & Enron & DGraph \\
        \midrule
        \textbf{Runtime (s)} & 19.72 & 3.15 & 0.02 & 0.14 & 13.78 & 10.22 \\
        \bottomrule
    \end{tabular}
    \label{tab:runtime}
\end{table}


\begin{table}[!h]
\centering
\caption{Numeric results for the sensitivity analysis of \(\gamma\) and \(\lambda\) 
    (with \(\lambda = 1 - \gamma\)) across datasets of different scales. 
    Reported values correspond to the AUC-ROC and Average Precision scores 
    shown in Figure~\ref{fig:gamma_lambda_trend}.}
\label{tab:gamma_lambda_all}
\setlength{\tabcolsep}{5pt}
\begin{tabular}{c|ccc|ccc|ccc}
\toprule
 & \multicolumn{3}{c|}{\textbf{Disney (small)}} 
 & \multicolumn{3}{c|}{\textbf{Books (medium)}} 
 & \multicolumn{3}{c}{\textbf{DGraph (large)}} \\
\textbf{$\gamma$} & $\lambda$ & AUC &  & $\lambda$ & AUC &  & $\lambda$ & AUC &  \\
\midrule
0.00 & 1.00 & \textbf{91.45} & & 1.00 & 64.07 & & 1.00 & 50.02 \\
0.10 & 0.90 & 72.03 & & 0.90 & 75.24 & & 0.90 & 66.14 \\
0.20 & 0.80 & 65.57 & & 0.80 & \textbf{78.08} & & 0.80 & 67.48 \\
0.30 & 0.70 & 54.80 & & 0.70 & 77.80 & & 0.70 & 68.10 \\
0.40 & 0.60 & 51.55 & & 0.60 & 77.26 & & 0.60 & 68.47 \\
0.50 & 0.50 & 49.72 & & 0.50 & 76.76 & & 0.50 & 68.71 \\
0.60 & 0.40 & 47.60 & & 0.40 & 76.39 & & 0.40 & 68.88 \\
0.70 & 0.30 & 45.76 & & 0.30 & 76.09 & & 0.30 & 69.00 \\
0.80 & 0.20 & 45.06 & & 0.20 & 75.93 & & 0.20 & 69.09 \\
0.90 & 0.10 & 43.64 & & 0.10 & 75.76 & & 0.10 & 69.17 \\
1.00 & 0.00 & 42.37 & & 0.00 & 75.65 & & 0.00 & \textbf{69.22} \\
\bottomrule
\end{tabular}
\end{table}

\begin{table}[!h]
\centering
\caption{
Numeric results for the ablation of the hyperparameter \(m\) on the Disney and noisy Disney datasets, 
where Gaussian noise with mean \(\mu = 0\) and variance \(\sigma^2 = 1\) is added. 
These values correspond to the AUC-ROC and Average Precision trends presented in Figure~\ref{fig:m_ablation}.
}

\label{tab:m_ablation}
\setlength{\tabcolsep}{5pt}
\begin{tabular}{c|cc|cc}
\toprule
\multirow{2}{*}{$m$} & \multicolumn{2}{c|}{\textbf{Disney}} & \multicolumn{2}{c}{\textbf{Noisy Disney}} \\
 & AUC-ROC & AP & AUC-ROC & AP \\
\midrule
1  & 83.33 & 29.42 & 81.42 & 31.96 \\
2  & 82.41 & 47.28 & 78.31 & 53.93 \\
3  & 81.70 & 42.23 & 80.36 & 49.79 \\
4  & 84.25 & 49.96 & 83.82 & 54.12 \\
5  & 88.06 & 50.10 & 87.07 & 50.29 \\
6  & 90.46 & 50.33 & 89.47 & 54.48 \\
7  & 90.32 & 50.37 & 90.60 & 54.44 \\
8  & 91.31 & 54.77 & 91.31 & 54.66 \\
9  & 92.16 & 55.12 & 92.16 & 54.78 \\
10 & 92.01 & 55.27 & 92.30 & 54.89 \\
11 & \textbf{92.44} & \textbf{55.24} & 92.01 & 54.88 \\
\bottomrule
\end{tabular}
\end{table}

\newpage
\section{Proofs of the theorems}\label{sec-proofs}
\subsection*{Proof of Theorem \ref{cat(G)}}
\begin{proof}
The composition of two morphisms $e_1 \bullet \cdots \bullet e_k : u \to v$ and $d_1 \bullet \cdots \bullet d_l : v \to w$ is defined as follows:
\[
(e_1 \bullet \cdots \bullet e_k) \bullet (d_1 \bullet \cdots \bullet d_l) = e_1 \bullet \cdots \bullet e_k \bullet d_1 \bullet \cdots \bullet d_l : u \to w.
\]
The associativity of the composition follows directly from the associativity of the monoidal operation $\bullet$.

For each node $v$, we assign the morphism $0 : v \to v$, where $0$ is the identity element of the monoid $\mathsf{Mod}(\mathcal{G})$. This morphism is defined to be the identity morphism for the object (node) $v$. Since $0$ is the identity element of $\mathsf{Mod}(\mathcal{G})$, the following commutative diagrams hold:

\[
\begin{aligned}
    &\xymatrix{u \ar[rr]^{e_1 \bullet \cdots \bullet e_k} && v \\ u \ar[u]^{0} \ar[rru]_{e_1 \bullet \cdots \bullet e_k} &&}
    \hspace{1.5cm}
    \xymatrix{u \ar[rr]^{e_1 \bullet \cdots \bullet e_k} \ar[rrd]_{e_1 \bullet \cdots \bullet e_k} && v \ar[d]^{0} \\ && v.}
\end{aligned}
\]

The identity morphisms are compatible with the composition. Therefore, $\mathsf{Cat}(\mathcal{G})$ satisfies the axioms of a category. 
\end{proof}

\subsection*{Proof of Theorem \ref{iso_graphs_iso_cats}}
\begin{proof}
Let $f: \mathcal{G} \to \mathcal{H}$ be an isomorphism between the graphs $\mathcal{G}$ and $\mathcal{H}$. The isomorphism $f$ induces a monoidal isomorphism $\mathsf{Mod}(f): \mathsf{Mod}(\mathcal{G}) \to \mathsf{Mod}(\mathcal{H})$ (see \cite{GGNN}). We define the mapping $\mathsf{Cat}(f): \mathsf{Cat}(\mathcal{G}) \to \mathsf{Cat}(\mathcal{H})$, which assigns:
- To each object $v$, the object $f(v)$.
- To each morphism $e_1 \bullet \cdots \bullet e_m : u \to v$, the morphism $\mathsf{Mod}(f)(e_1 \bullet \cdots \bullet e_m) : f(u) \to f(v)$.

Since $\mathsf{Mod}(f)$ is a monoidal isomorphism, $\mathsf{Cat}(f)$ maps $0 : v \to v$ to $0 : f(v) \to f(v)$ and preserves composition. Therefore, $\mathsf{Cat}(f)$ is a functor.

Let $g: \mathcal{H} \to \mathcal{G}$ be the inverse of $f$. Then $\mathsf{Mod}(g)$ induces a functor from $\mathsf{Cat}(\mathcal{H})$ to $\mathsf{Cat}(\mathcal{G})$. It can be verified that this functor is the inverse of the functor $\mathsf{Cat}(f)$.

Now, let $L: \mathsf{Cat}(\mathcal{G}) \to \mathsf{Cat}(\mathcal{H})$ be an isomorphism with inverse $R$. This functor $L$ induces an isomorphism between the sets of nodes of $\mathcal{G}$ and $\mathcal{H}$. Let $e : u \to v$ be a directed edge, and suppose $L(e) = d_1 \bullet \cdots \bullet d_l$. Since $R$ preserves composition, we have:
\[
e = R(d_1) \bullet \cdots \bullet R(d_l).
\]
This implies that there exists an index $i_0$ such that $R(d_{i_0}) \neq 0$ and $R(d_i) = 0$ for $i \neq i_0$. Consequently, $d_i = 0$ for $i \neq i_0$. Thus, $L$ maps directed edges to directed edges, and $L(e) = d$ for some directed edge $d : L(u) \to L(v)$.

Similarly, $L$ maps the reverse edge $e' : v \to u$ to $d' : L(v) \to L(u)$. Therefore, $L$ induces a mapping between the sets of edges of $\mathcal{G}$ and $\mathcal{H}$ that is compatible with the induced mapping between the sets of nodes.
\end{proof}

\subsection*{Proof of Theorem \ref{quotient}}
\begin{proof}
    Based on the definition of $\sim$, the number of equivalence classes for every directed edge in
    \[\bigoplus_{(\mathcal{M}_\alpha, S_\alpha) \in \mathsf{Cov}(m)} \mathcal{M}_\alpha\]
    is less than or equal to $m$. We show that this number is exactly $m$.

    Let $e: v \rightarrow w$ be a directed edge, and let $e'$ be the directed edge with the opposite direction. Consider the element 
    \[
    e_1 \bullet e_2 \cdots \bullet e_m \in \mathsf{Cov}(m),
    \]
    where $e_i = e$ if $i$ is odd, and $e_i = e'$ if $i$ is even. For each odd number between $1$ and $m$, there is an equivalence class $[e]_i$. Similarly, we can construct the opposite of this element:
    \[
    e_1 \bullet e_2 \cdots \bullet e_m \in \mathsf{Cov}(m),
    \]
    where $e_i = e$ if $i$ is even, and $e_i = e'$ if $i$ is odd. For each even number between $1$ and $m$, there is an equivalence class $[e]_i$. Therefore, for every directed edge $e$, there are exactly $m$ equivalence classes $[e]_i$.

    Now, let $(\mathcal{M}, S) = \mathcal{G}_1 \bullet \mathcal{G}_2 \bullet \cdots \bullet \mathcal{G}_m$, where for all $1 \leq i \leq m$, $\mathcal{G}_i$ is the monoidal representation of $\mathcal{G}$:
    \[
    \mathcal{G}_i = \left(\bigoplus_{e \in \mathsf{DE}(\mathcal{G})} e, \mathsf{DE}(\mathcal{G}) \right) \in \mathsf{SMult}(\mathcal{G}).
    \]
    We define the mapping
    \[
    \phi: \left(\bigoplus_{(\mathcal{M}_\alpha, S_\alpha) \in \mathsf{Cov}(m)} \mathcal{M}_\alpha \right) / \sim \longrightarrow \mathcal{M}
    \]
    by sending $[e]_i$ to the directed edge representing $e \in \mathcal{G}_i$ in $\mathcal{M}$. If $[e]_i = [e']_i$, then $e$ and $e'$ start at the same node and end at the same node. Consequently, $\phi([e]_i) = \phi([e']_i)$, so $\phi$ is well-defined.

    The mapping $\phi$ has an inverse that sends the directed edge representing $e \in \mathcal{G}_i$ in $\mathcal{M}$ to $[e]_i$. Therefore, $\phi$ is an isomorphism between multigraphs.

    The equivalence relation $\sim$ induces an equivalence relation on
    \[
    \bigcup_{(\mathcal{M}_\alpha, S_\alpha) \in \mathsf{Cov}(m)} \nu(S_\alpha).
    \]
    Let $(\mathcal{M}_{\alpha_0}, S_{\alpha_0}) = e_1 \bullet \cdots \bullet e_m$ and $(\mathcal{M}_{\alpha_1}, S_{\alpha_1}) = c_1 \bullet \cdots \bullet c_m$. For a path $p = e_{p_1}, \cdots, e_{p_k} \in \nu(S_{\alpha_0})$, where $1 \leq p_1 < \cdots < p_k \leq m$, and a path $q = c_{q_1}, \cdots, c_{q_k} \in \nu(S_{\alpha_1})$, where $1 \leq q_1 < \cdots < q_k \leq m$, the induced relation is defined as follows:
    \[
    p \sim q \iff e_{p_i} \sim c_{q_i} \quad \text{for all } 1 \leq i \leq k.
    \]

    Then, a path $p \in \left(\bigcup_{(\mathcal{M}_\alpha, S_\alpha) \in \mathsf{Cov}(m)} \nu(S_\alpha) \right) / \sim$ is a sequence $[e_1]_{p_1}, [e_2]_{p_2}, \cdots, [e_k]_{p_k}$, where $k \leq m$ and $1 \leq p_1 < \cdots < p_k \leq m$. We extend the definition of $\phi$ to 
    \[
    \left(\bigcup_{(\mathcal{M}_\alpha, S_\alpha) \in \mathsf{Cov}(m)} \nu(S_\alpha) \right) / \sim
    \]
    as follows:
    \[
\phi([e_1]_{m_1} [e_2]_{m_2} \cdots [e_k]_{m_k}) := 
\phi([e_1]_{m_1})\, \phi([e_2]_{m_2}) \cdots \phi([e_k]_{m_k})\].

    Two paths $p, q \in \left(\bigcup_{(\mathcal{M}_\alpha, S_\alpha) \in \mathsf{Cov}(m)} \nu(S_\alpha) \right) / \sim$ are equal if and only if they traverse the same directed edges with the same equivalence classes, which happens if and only if $\phi(p) = \phi(q)$. Thus, $\phi$ is well-defined and injective. Clearly, $\phi$ is also surjective.

    Therefore, we conclude:
    \[
    \left(\bigoplus_{(\mathcal{M}_\alpha, S_\alpha) \in \mathsf{Cov}(m)} \mathcal{M}_\alpha, \bigcup_{(\mathcal{M}_\alpha, S_\alpha) \in \mathsf{Cov}(m)} \nu(S_\alpha) \right) / \sim \ \overset{\phi}{\cong} \ \mathcal{G}^m.
    \]
\end{proof}

\subsection*{Proof of Theorem \ref{mat rep of element}}
\begin{proof}
    We prove this statement by induction on $m$.
 
    For $m=1$, the claim is trivial since the number of paths from $v_i$ to $v_j$ in $(\bigoplus_{e \in \mathsf{DE}(\mathcal{G})} e, \mathsf{DE}(\mathcal{G}))$ corresponds exactly to the $(i,j)$-entry of $\mA$, the adjacency matrix of $\mathcal{G}$.
  
    Assume the statement is true for $m-1$, i.e., the number of paths from $v_i$ to $v_j$ in $\mathcal{G}^{m-1} = (\mathcal{N}, T)$ equals the $(i,j)$-entry of $\mA \circ \cdots \circ \mA$ (the composition of $\mA$ with itself ($(m-1)$-times).  
    We will show the statement holds for $m$.

    For $m$, let:
    \[
(\mathcal{M}, S) = (\mathcal{N}, T) \bullet 
\Big(\bigoplus_{e \in \mathsf{DE}(\mathcal{G})} e,\; \mathsf{DE}(\mathcal{G})\Big)
= \Big(\bigoplus_{e \in \mathsf{DE}(\mathcal{G})} e \oplus \mathcal{N},\;
T \star \mathsf{DE}(\mathcal{G})\Big).
\]

A path in $S = T \star \mathsf{DE}(\mathcal{G})$ can be categorized as follows:
    1. It is a path in $T$ (i.e., a path entirely within $\mathcal{G}^{m-1}$), or  
    2. It is a path in $\mathsf{DE}(\mathcal{G})$, or  
    3. It is a composition of a subpath in $T$ from $v_i$ to some intermediate node $v_l$, followed by a subpath in $\mathsf{DE}(\mathcal{G})$ from $v_l$ to $v_j$.

    Using this decomposition, the total number of paths from $v_i$ to $v_j$ in $S$ is given by:
    \text{(Number of paths in $T$ from $v_i$ to $v_j$)} + \text{(Number of paths in $\mathsf{DE}(\mathcal{G})$ from $v_i$ to $v_j$)} + \text{(Composition of subpaths from $v_i$ to $v_j$)}.

    By the inductive hypothesis, the number of paths in $T$ equals the $(i,j)$-entry of $\mA \circ \cdots \circ \mA$ ($(m-1)$ times). Additionally, the number of paths in $\mathsf{DE}(\mathcal{G})$ is given by $\mA$, and the composition of subpaths corresponds to the multiplication of $(m-1)$ copies of $\mA$ with $\mA$, i.e., $(\mA \circ \cdots \circ \mA)\mA$.

    Therefore, the total number of paths from $v_i$ to $v_j$ in $S$ equals the $(i,j)$-entry of:
\[
\mA + (\mA \circ \cdots \circ \mA)_{(m-1)\text{-times}} 
+ (\mA \circ \cdots \circ \mA)_{(m-1)\text{-times}} \mA 
= (\mA \circ \cdots \circ \mA)_{(m)\text{-times}}.
\]
Thus, the statement is true for $m$. By induction, the proof is complete.
\end{proof}

\subsection*{Proof of Theorem \ref{binomial}}
\begin{proof}
    In \cite{GGNN}, it is proved that for $\mA_1, \mA_2, \cdots, \mA_m\in \mathsf{Mat}_{n}(\mathbb{R})$ with $m\in \mathbb{N}$: 
    \begin{equation*}
    \begin{split}
    \mA_1\circ \mA_2 \circ \cdots \circ \mA_m & =\sum_{i=1}^{m}\mA_i\\
    &+\sum_{\sigma\in O(m,2)}\mA_{\sigma_1}\mA_{\sigma_2}+\cdots\\
    &+\sum_{\sigma\in O(m,j)}\mA_{\sigma_1}\cdots \mA_{\sigma_j}+ \cdots\\
    &+\mA_1 \mA_2\cdots \mA_m
    \end{split}
    \end{equation*}
    where $O(m, i)$ is the set of all strictly monotonically increasing sequences of $i$ numbers of $\lbrace1, \cdots, m\rbrace$.
    By setting $\mA_i=\mB$ for all $1\le i\le m$, we get
    \[\mB\circ\cdots\circ \mB(\ m\textit{-times })=\sum_{i=1}^m\binom{m}{i}\mB^i=( \mI+\mB)^m-\mI\]
    
\end{proof}
\subsection*{Proof of Theorem \ref{dist on complete graphs}}
Before proving the theorem, we need to state the following lemma. 

\begin{lemma}\label{lem}
    \[\frac{P_j}{1-P_j}x=P_jx+P_j\frac{P_j}{1-P_j}x\]
\end{lemma}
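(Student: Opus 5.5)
The identity in Lemma~\ref{lem} is purely algebraic. I would prove it by direct manipulation of the scalar coefficients of $x$, assuming throughout that $P_j \neq 1$. This assumption is already implicit in Definition~\ref{induced-weights}, since otherwise the weight $\frac{P_j}{1-P_j}$ is undefined.

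The plan is to factor $P_j x$ out of the right-hand side and put the two remaining terms over the common denominator $1-P_j$:
\[
P_j x + P_j\frac{P_j}{1-P_j}x \;=\; P_j\left(1+\frac{P_j}{1-P_j}\right)x \;=\; P_j\,\frac{(1-P_j)+P_j}{1-P_j}\,x \;=\; \frac{P_j}{1-P_j}\,x .
\]
This is exactly the left-hand side. Equivalently, one can multiply both sides by $1-P_j$ and check that $P_j x = P_j(1-P_j)x + P_j^2 x$, which holds identically. The variable $x$ plays no role: it may be a scalar, a vector, or a matrix entry, since only scalar multiplication by expressions in $P_j$ is involved.

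There is no genuine obstacle here beyond the nondegeneracy condition $P_j<1$. The only point worth recording is why this form is useful for Theorem~\ref{dist on complete graphs}. Writing $w_j=\frac{P_j}{1-P_j}$, the lemma reads $w_j = P_j + P_j w_j$, that is, $w_j = P_j(1+w_j)$. This is the self-consistency relation I expect to exploit when comparing the row $\mW_{i,:}$ (with $\theta=0$) on a complete graph against the vector $(P_1,\dots,P_{|V|})$ in the limit $m\to\infty$.
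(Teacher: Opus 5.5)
Your proof is correct and is the same one-line verification the paper has in mind when it calls the lemma obvious: factor out $P_j x$ and use $1+\frac{P_j}{1-P_j}=\frac{1}{1-P_j}$. You are also right that the implicit hypothesis $P_j\neq 1$ is needed, and your form $w_j=P_j(1+w_j)$ is how the paper uses the lemma in the proof of Theorem~\ref{dist on complete graphs}, where it writes $\pi(p)=P_j\,\pi(\bar p)+P_j\,\pi(p)$ for every path $p$ ending at $v_j$.
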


The proof of this lemma is obvious. In the following, we prove Theorem \ref{dist on complete graphs}.
\begin{proof}
Let \(\mathcal{C}\) be a complete graph. Associated with every path \(p = e_{m_1}, \dots, e_{m_k} \in \mathcal{C}^m\), we define \(\pi(p)\) as

\[
\pi(p) = \prod_{l=1}^{k} w(e_{m_l}),
\]

where \(w(e_{m_l})\) is the associated weight for the directed edge \(e_{m_l}\) in \(\mW\), the matrix of induced weights from \(P\) with \(\theta = 0\) (see Definition \ref{induced-weights}). Also, for every path \(p \in \mathcal{C}^m\), we associate \(\bar{p}\), which is the path in \(\mathcal{C}^m\) obtained by removing the last directed edge. For each pair of nodes \((v_i, v_j)\), we form the following monotone sequences of sets:
\[
\big\{\, \bar{p} : p \in \mathcal{C}^m 
\text{ is a path from } v_i \text{ to } v_j \,\big\}
\subseteq 
\big\{\, \bar{p} : p \in \mathcal{C}^{m+1} 
\text{ is a path from } v_i \text{ to } v_j \,\big\}.
\]

\[
\big\{\, p : p \in \mathcal{C}^m 
\text{ is a path from } v_i 
\text{ to a node in } V \setminus v_j \,\big\}
\subseteq
\big\{\, p : p \in \mathcal{C}^{m+1} 
\text{ is a path from } v_i 
\text{ to a node in } V \setminus v_j \,\big\}.
\]

Since \(\mathcal{C}\) is a complete graph, one can easily verify that the limits of both sequences, as \(m \to \infty\), are equal:

\begin{equation}\label{new1}
\bigcup_{m=1}^{\infty} 
\big\{\, \bar{p} : p \in \mathcal{C}^m 
\text{ is a path from } v_i \text{ to } v_j \,\big\}
= 
\bigcup_{m=1}^{\infty} 
\big\{\, p : p \in \mathcal{C}^m 
\text{ is a path from } v_i 
\text{ to a node in } V \setminus v_j \,\big\}.
\end{equation}

Let \(\mM^{(m)} = \mathsf{DMI}(P, m, 0)\). Thus, we have

\[
\mM^{(m)}_{i,j} = \sum_{\lbrace p \in \mathcal{C}^m: \text{ from } v_i \text{ to } v_j \rbrace} \pi(p).
\]

Since the end node of every path \(p\) in the above summation is \(v_j\), \(\pi(p)\) is a multiple of \(\frac{P_j}{1 - P_j}\). Lemma \ref{lem} implies
\[
\mM^{(m)}_{i,j}
= P_j \!\!\sum_{\{\bar{p}: p \in \mathcal{C}^m 
\text{ is a path from } v_i \text{ to } v_j\}}\! \pi(\bar{p})
+\, P_j \!\!\sum_{\{p \in \mathcal{C}^m :
\text{ from } v_i \text{ to } v_j\}}\! \pi(p).
\]

Therefore,

\begin{equation}\label{new2}
    \lim_{m \to \infty} \frac{\mM^{(m)}_{i,j}}{\|\mM^{(m)}_{i,:}\|_1} = P_j \lim_{m \to \infty} \frac{\sum_{\lbrace \bar{p}: p \in \mathcal{C}^m \text{ is a path from } v_i \text{ to } v_j \rbrace} \pi(\bar{p}) + \sum_{\lbrace p \in \mathcal{C}^m: \text{ from } v_i \text{ to } v_j \rbrace} \pi(p)}{\|\mM^{(m)}_{i,:}\|_1}.
\end{equation}

Using Equation \ref{new1}, Equation \ref{new2} becomes

\[
= P_j \lim_{m \to \infty} \frac{\sum_{\lbrace p: p \in \mathcal{C}^m \text{ is a path from } v_i \text{ to a node in } V \setminus v_j \rbrace} \pi(p) + \sum_{\lbrace p \in \mathcal{C}^m: \text{ from } v_i \text{ to } v_j \rbrace} \pi(p)}{\|\mM^{(m)}_{i,:}\|_1},
\]

\[
= P_j \lim_{m \to \infty} \frac{\sum_{\lbrace p: p \in \mathcal{C}^m \text{ is a path from } v_i \text{ to a node in } V \rbrace} \pi(p)}{\|\mM^{(m)}_{i,:}\|_1} = P_j \lim_{m \to \infty} \frac{\|\mM^{(m)}_{i,:}\|_1}{\|\mM^{(m)}_{i,:}\|_1} = P_j.
\]
\end{proof}
\subsection*{Proof of Theorem \ref{partial-order}}
\begin{proof}
Compatibility implies:
\begin{equation*}\begin{split}
            (\mathcal{M},S) \le (\mathcal{N},T) \implies (\mathcal{M},S) \bullet (\mathcal{Q},R) \le (\mathcal{N},T) \bullet (\mathcal{Q},R)& \\
            \And (\mathcal{Q}',R') \bullet (\mathcal{M},S) \le (\mathcal{Q}',R') \bullet (\mathcal{N},T)&
        \end{split}
\end{equation*}
    The proof of theorem is straightforward.
\end{proof}
\subsection*{Proof of Theorem \ref{mono homo R}}
\begin{proof}
    Let \((\mathcal{M}, S)\) and \((\mathcal{N}, T)\) be in \(\mathsf{SMult}(\mathcal{C})\). It is easy to verify that

\[
\widehat{\mathcal{M} \oplus \mathcal{N}} = \hat{\mathcal{M}} \oplus \hat{\mathcal{N}}.
\]

We have: \(r \in \widehat{S \star T}\) if and only if \(r \in S \star T\) and \(r\) does not traverse any directed edge of the set \(\mathsf{DE}(\mathcal{C}) \setminus \mathsf{DE}(\mathcal{G})\), if and only if \(r \in S\) or \(r \in T\), or there are \(p \in S\) and \(q \in T\) such that \(r = pq\), where \(p\) and \(q\) do not traverse any directed edge of the set \(\mathsf{DE}(\mathcal{C}) \setminus \mathsf{DE}(\mathcal{G})\), if and only if \(r \in \hat{S} \star \hat{T}\). Then,

\[
\widehat{S \star T} = \hat{S} \star \hat{T}.
\]

Therefore,

\[
\mathsf{R}((\mathcal{M}, S) \bullet (\mathcal{N}, T)) = \mathsf{R}(\mathcal{M}, S) \bullet \mathsf{R}(\mathcal{N}, T).
\]

\end{proof}

\subsection*{Proof of Theorem \ref{galois}}
\begin{proof}
The definition of \(\mathsf{R}\) is as follows:

\[
\mathsf{R}(\mathbf{In}(\mathcal{M}, S)) = \mathsf{R}(\mathcal{M}, S) = (\mathcal{M}, S)
\]

for \((\mathcal{M}, S) \in \mathsf{SMult}(\mathcal{G})\). Since \(\mathsf{R}\) preserves the relation \(\le\), for \((\mathcal{N}, T), (\mathcal{Q}, R) \in \mathsf{SMult}(\mathcal{C})\), we have:

\[
(\mathcal{N}, T) \le (\mathcal{Q}, R) \implies \mathsf{R}(\mathcal{N}, T) \le \mathsf{R}(\mathcal{Q}, R)
\]

Let \((\mathcal{M}, S) \le \mathsf{R}(\mathcal{N}, T)\), where \((\mathcal{M}, S) \in \mathsf{SMult}(\mathcal{G})\) and \((\mathcal{N}, T) \in \mathsf{SMult}(\mathcal{C})\). Then, there exists a morphism \(\iota: \mathcal{M} \hookrightarrow \hat{\mathcal{N}}\) such that \(\iota(S) \subseteq \hat{T}\). Consequently,

\[
\iota: \mathcal{M} \hookrightarrow \hat{\mathcal{N}} \hookrightarrow \mathcal{N} \quad \text{and} \quad \iota(S) \subseteq\hat{T} \subseteq T
\]

Thus, \((\mathcal{M}, S) \le (\mathcal{N}, T)\). Therefore,

\[
\mathbf{In}(\mathcal{M}, S) = (\mathcal{M}, S) \le (\mathcal{N}, T) \iff (\mathcal{M}, S) = \mathsf{R}(\mathbf{In}(\mathcal{M}, S)) \le \mathsf{R}(\mathcal{N}, T)
\]

Hence, \((\mathbf{In}, \mathsf{R})\) forms a Galois connection.
\end{proof}
\subsection{Proof of Corollary \ref{approximate element}}
\begin{proof}
    This is a direct consequence of Theorem \ref{galois}.
\end{proof}

\end{document}